\documentclass[twoside]{article}

%
\usepackage[accepted]{aistats2019}
%


\usepackage[round]{natbib}


\usepackage{amssymb,amsmath,amsthm,bbm}
\usepackage{verbatim,float,url,dsfont}
\usepackage{graphicx,subfigure,psfrag}
\usepackage{algorithm,algorithmic}
\usepackage{mathtools,enumitem}
\usepackage{microtype}
\usepackage{xr-hyper}
\usepackage[colorlinks=true,citecolor=blue,urlcolor=blue,linkcolor=blue]{hyperref}
\usepackage{multicol}

\newtheorem{theorem}{Theorem}
\newtheorem{lemma}{Lemma}
\newtheorem{corollary}{Corollary}
\newtheorem{proposition}{Proposition}
\theoremstyle{definition}
\newtheorem{remark}{Remark}

\def\R{\mathbb{R}}
\def\E{\mathbb{E}}
\def\P{\mathbb{P}}
\def\Q{\mathbb{Q}}
\def\Cov{\mathrm{Cov}}

\def\tf{\tilde{f}}

\def\one{\mathds{1}}
\def\cF{\mathcal{F}}
\def\cG{\mathcal{G}}

\def\TV{\mathrm{TV}}
\def\G{\mathbb{G}}
\def\ipm{\rho}

\begin{document}

\twocolumn[

\aistatstitle{A Higher-Order Kolmogorov-Smirnov Test}

\aistatsauthor{Veeranjaneyulu Sadhanala$^1$ \quad
  Yu-Xiang Wang$^2$ \quad Aaditya Ramdas$^1$ \quad
  Ryan J. Tibshirani$^1$}

\aistatsaddress{$^1$Carnegie Mellon University \And $^2$University of
  California at Santa Barbara}]

\runningauthor{Veeranjaneyulu Sadhanala, Yu-Xiang Wang, Aaditya Ramdas, Ryan
  J. Tibshirani} 

\begin{abstract} 
We present an extension of the Kolmogorov-Smirnov (KS) two-sample test, which
can be more sensitive to differences in the tails.  Our test statistic is an
integral probability metric (IPM) defined over a higher-order total variation ball,
recovering the original KS test as its simplest case.  We give an exact
representer result for our IPM, which generalizes the fact that the original
KS test statistic can be expressed in equivalent variational and CDF forms.  For
small enough orders ($k \leq 5$), we develop a linear-time algorithm for
computing our higher-order KS test statistic; for all others ($k \geq 6$), we
give a nearly linear-time approximation.  We derive the asymptotic null
distribution for our test, and show that our nearly linear-time approximation
shares the same asymptotic null.  Lastly, we complement our theory with
numerical studies. 
\end{abstract}

\section{INTRODUCTION}
\label{sec:intro}
The Kolmogorov-Smirnov (KS) test \citep{kolmogorov1933sulla,smirnov1948table} is
a classical and celebrated tool for nonparametric hypothesis testing.  Let
\smash{$x_1,\ldots,x_m \sim P$} and 
\smash{$y_1,\ldots,y_n \sim Q$} be independent samples.  Let \smash{$X_{(m)}$}
and \smash{$Y_{(n)}$} denote the two sets of samples, and also let
\smash{$Z_{(N)} = X_{(m)} \cup Y_{(n)} = \{z_1,\ldots,z_N\}$}, where $N=m+n$.
The two-sample KS test statistic is defined as
\begin{equation}
\label{eq:ks}
\max_{z \in Z_{(m+n)}} \; \bigg|\frac{1}{m}\sum_{i=1}^m 1\{x_i \leq z\} -
  \frac{1}{n}\sum_{i=1}^n 1\{y_i \leq z\} \bigg|.
\end{equation}
In words, this measures the maximum absolute difference between the empirical
cumulative distribution functions (CDFs) of \smash{$X_{(m)}$} and
\smash{$Y_{(n)}$}, across all points in the joint sample
\smash{$Z_{(m+n)}$}. Naturally, the two-sample KS test rejects the null
hypothesis of $P=Q$ for large values of the statistic. The statistic
\eqref{eq:ks} can also be written in the following variational form:
\begin{equation}
\label{eq:ks_var}
\sup_{f \,:\, \TV(f) \leq 1} \; |\P_m f - \Q_n f|,
\end{equation}
where $\TV(\cdot)$ denotes total variation, and we define the empirical
expectation operators $\P_m,\Q_n$ via 
$$
\P_m f = \frac{1}{m}\sum_{i=1}^m f(x_i) 
\;\; \text{and} \;\;
\Q_n f = \frac{1}{n} \sum_{i=1}^n f(y_i).
$$  
Later, we will give a general representation result that implies the 
equivalence of \eqref{eq:ks} and \eqref{eq:ks_var} as a special case.   

The KS test is a fast, general-purpose two-sample nonparametric test.
But being a general-purpose test also means that it is systematically less
sensitive to some types of differences, such as tail differences 
\citep{bryson1974heavy}.  Intuitively, this is because the empirical CDFs of
\smash{$X_{(m)}$} and \smash{$Y_{(n)}$} must both tend to 0 as $z \to -\infty$
and to 1 as $z \to \infty$, so the gap in the tails will not be large.

The insensitivity of the KS test to tail differences is well-known.  Several
authors have proposed modifications to the KS test to improve its tail
sensitivity, based on variance-reweighting \citep{anderson1952asymptotic}, or
Renyi-type statistics \citep{mason1983modified,calitz1987alternative}, to name a
few ideas.  In a different vein, \citet{wang2014falling} recently proposed a
higher-order extension of the KS two-sample test, which replaces the total
variation constraint on $f$ in \eqref{eq:ks_var} with a total variation
constraint on a derivative of $f$.  These authors show empirically that, in some
cases, this modification can lead to better tail sensitivity. In the current
work, we refine the proposal of \citet{wang2014falling}, and give theoretical
backing for this new test.   

\paragraph{A Higher-Order KS Test.} Our test statistic has the form of an
integral probability metric (IPM). For a function class $\cF$, the IPM between
distributions $P$ and $Q$, with respect to $\cF$, is defined as
\citep{muller1997integral} 
\begin{equation}
\label{eq:ipm}
\ipm(P,Q; \cF) = \sup_{f\in \cF} \; | \P f  - \Q f |
\end{equation}
where we define the expectation operators $\P,\Q$ by 
$$
\P f = \E_{X \sim P}[f(X)] \;\; \text{and} \;\; \Q f=\E_{Y \sim Q}[f(Y)]. 
$$
For a given function class $\cF$, the IPM $\ipm(\cdot,\cdot\,; \cF)$ is a 
pseudometric on the space of distributions. Note that the KS test in
\eqref{eq:ks_var} is precisely $\ipm(P_m,Q_n; \cF_0)$, where $P_m,Q_n$ are the
empirical distributions of \smash{$X_{(m)},Y_{(n)}$}, respectively, and $\cF_0 =
\{ f : \TV(f) \leq 1\}$.

Consider an IPM given by replacing $\cF_0$ with $\cF_k=\{f : 
\TV(f^{(k)}) \leq 1\}$, for an integer $k \geq 1$ (where we write $f^{(k)}$ for
the  $k$th weak derivative of $f$).  Some motivation is as follows.  In the case 
$k=0$, we know that the {\it witness functions} in the KS test
\eqref{eq:ks_var}, i.e., the functions in $\cF_0$ that achieve the
supremum, are piecewise constant step functions (cf.\ the equivalent
representation \eqref{eq:ks}).  These functions can only have so much action in
the tails.  By moving to $\cF_k$, which is essentially comprised of the $k$th
order antiderivative of functions in $\cF_0$, we should expect that the witness 
functions over $\cF_k$ are $k$th order antiderivatives of piecewise constant
functions, i.e., $k$th degree piecewise polynomial functions, which can have
much more sensitivity in the tails. 

But simply replacing $\cF_0$ by $\cF_k$ and proposing to compute 
$\ipm(P_m,Q_n; \cF_k)$ leads to an ill-defined test.  This is due to the fact
that $\cF_k$ contains all polynomials of degree $k$.  Hence, if the 
$i$th moments of $P_m,Q_n$ differ, for any $i \in [k]$ (where we abbreviate
$[a]=\{1,\ldots,a\}$ for an integer $a \geq 1$), then $\ipm(P_m,Q_n;
\cF_k)=\infty$.    

As such, we must modify $\cF_k$ to control the growth of its elements.
While there are different ways to do this, not all result in computable
IPMs. The approach we take yields an exact representer theorem (generalizing the
equivalence between \eqref{eq:ks} and \eqref{eq:ks_var}). Define   
\begin{multline}
\label{eq:fk}
\cF_k = \big\{ f : \TV(f^{(k)}) \leq 1, \\
 f^{(j)}(0) =0, \; j \in \{0\} \cup [k-1], \\  
f^{(k)}(0+)=0 \; \text{or} \; f^{(k)}(0-)=0 \big\}.
\end{multline}
Here \smash{$f^{(k)}(0+)$} and \smash{$f^{(k)}(0-)$} denote one-sided limits 
at 0 from above and below, respectively.  Informally, the functions in $\cF_k$
are pinned down at 0, with all lower-order derivatives (and the limiting
$k$th derivative from the right or left) equal to 0, which limits their
growth. Now we define the {\it $k$th-order KS test statistic} as   
\begin{equation}
\label{eq:hks}
\ipm(P_m,Q_n;\cF_k) = \sup_{f \in \cF_k} \; |\P_m f - \Q_n f|.  
\end{equation} 
An important remark is that for $k=0$, this recovers the original KS test 
statistic \eqref{eq:ks_var}, because $\cF_0$ contains all step functions of the
form $g_t(x) = 1\{x \leq t\}$, $t \geq 0$.  

Another important remark is that for any $k \geq 0$, the function class $\cF_k$
in \eqref{eq:fk} is ``rich enough'' to make the IPM in \eqref{eq:hks} a
metric. We state this formally next; its proof, as with all other proofs, is
in the appendix.

\begin{proposition}
\label{prop:metric}
For any $k\geq 0$, and any $P,Q$ with $k$ moments, $\ipm(P,Q;\cF_k) = 0$
if and only if $P=Q$.  
\end{proposition}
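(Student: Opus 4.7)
The ``if'' direction is trivial: if $P=Q$ then $\P f = \Q f$ whenever both are defined, so the supremum in \eqref{eq:ipm} vanishes. For the ``only if'' direction, my plan is to exhibit an explicit one-parameter subfamily of witness functions in $\cF_k$ that is rich enough to uniquely determine a distribution. The natural choice is the truncated power family
$$
g_t(x) = \frac{(x-t)_+^k}{k!}, \; t \geq 0, \qquad h_t(x) = \frac{(t-x)_+^k}{k!}, \; t \leq 0.
$$

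First I would verify membership $g_t, h_t \in \cF_k$. In each case the $k$th (weak) derivative is an indicator up to sign, so has total variation exactly $1$; and each function is identically zero on a neighborhood of $0$ (or, when $t=0$, on one side of $0$), which simultaneously enforces $f^{(j)}(0)=0$ for $j \in \{0\}\cup[k-1]$ and gives $f^{(k)}(0+)=0$ or $f^{(k)}(0-)=0$.

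Next, suppose $\ipm(P,Q;\cF_k)=0$, so that $\P g_t = \Q g_t$ for every $t \geq 0$. I would differentiate $k$ times with respect to $t$ under the integral: each differentiation lowers the power by one, eventually yielding
$$
\frac{d^k}{dt^k} \P g_t = (-1)^k \, P(X > t), \quad t \geq 0,
$$
and likewise for $Q$. Hence $P(X>t) = Q(Y>t)$ for $t \geq 0$. The symmetric calculation with $h_t$ gives $P(X<t) = Q(Y<t)$ for $t \leq 0$. Together, the right- and left-continuous distribution functions of $P$ and $Q$ agree everywhere, which forces $P=Q$.

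The main obstacle, and the only nontrivial step, is justifying differentiation under the integral $k$ times. Here is exactly where the ``$k$ moments'' hypothesis is used: at each stage, the integrand's $t$-derivative is of the form $\pm (x-t)_+^{k-j}/(k-j)!$, which is dominated on any bounded $t$-interval by a constant multiple of $(1+|x|)^{k-j}$, hence by an integrable envelope under the $k$-moment assumption, and so dominated convergence (applied inductively in $j$) justifies the exchange of limit and integral. Everything else is routine bookkeeping on the definition of $\cF_k$.
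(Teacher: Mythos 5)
Your proof is correct, but it takes a genuinely different route from the paper. The paper's argument is abstract and measure-theoretic: it rescales to drop the constraint $\TV(f^{(k)})\leq 1$, restricts to the subclass of functions vanishing on a half-line, observes that this subclass contains all of $C^\infty_c(\R_+)$ (respectively $C^\infty_c(\R_-)$), and then invokes a separate lemma (proved via Urysohn's lemma and inner regularity of Radon measures) asserting that if $\E_P f = \E_Q f$ for all $f \in C^\infty_c(\Omega)$ then $P = Q$ on $\Omega$; the atom at $0$ is handled at the end by subtraction. Your argument is constructive and more elementary: you work directly with the truncated-power witnesses $g^{\pm}_t$ (which, notably, are exactly the representers appearing in Theorem~\ref{thm:hks_representer}), verify membership in $\cF_k$, and then \emph{invert} the truncated-moment constraints by $k$-fold differentiation under the integral sign, recovering the tail functions $P(X>t)$ for $t\geq 0$ and $P(X<t)$ for $t\leq 0$. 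This avoids Urysohn's lemma and Radon regularity entirely, and it makes the role of the $k$-moment hypothesis visible exactly where it is used (to provide an integrable envelope at each differentiation). It also ties the metric property directly to the representer class $\cG_k$, which the paper only does later in Corollary~\ref{cor:hks_representer_pop}. Two small points you should make explicit when writing this up: (i) the final pointwise derivative of $\E_P(X-t)_+$ is only one-sided in general (the right derivative gives $-P(X>t)$, the left gives $-P(X\geq t)$); working consistently with one-sided derivatives is enough and in fact the limits exist by monotonicity, so no continuity assumption on $P$ is needed; and (ii) having $P(X>t)=Q(X>t)$ for $t\geq 0$ and $P(X<t)=Q(X<t)$ for $t\leq 0$ determines $P$ on $(0,\infty)$ and on $(-\infty,0)$, and the atom at $0$ is then pinned down by $P(\{0\}) = 1 - P((-\infty,0)) - P((0,\infty))$, matching the paper's own closing step.
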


\paragraph{Motivating Example.} 
Figure \ref{fig:motivation1} shows the results of a simple simulation comparing
the proposed higher-order tests \eqref{eq:hks}, of orders $k=1$ through 5,
against the usual KS test (corresponding to $k=0$).  For the simulation setup,
we used $P=N(0,1)$ and $Q=N(0,1.44)$. For 500 ``alternative'' repetitions, we
drew $m=250$ samples from $P$, drew $n=250$ samples from $Q$, and computed 
test statistics; for another 500 ``null'' repetitions, we permuted the $m+n=500$ 
samples from the corresponding alternative repetition, and again computed test
statistics.  For each test, we varied the rejection threshold for each test, we
calculated its true positive rate
using the alternative repetitions, and calculated its false positive rate using
the null repetitions.  The oracle ROC curve corresponds to the likelihood ratio
test (which knows the exact distributions $P,Q$).  Interestingly, we can see
that power of the higher-order KS test improves as we increase the order from
$k=0$ up to $k=2$, then stops improving by $k=3,4,5$.

\begin{figure}[htb]
\includegraphics[width=\columnwidth]{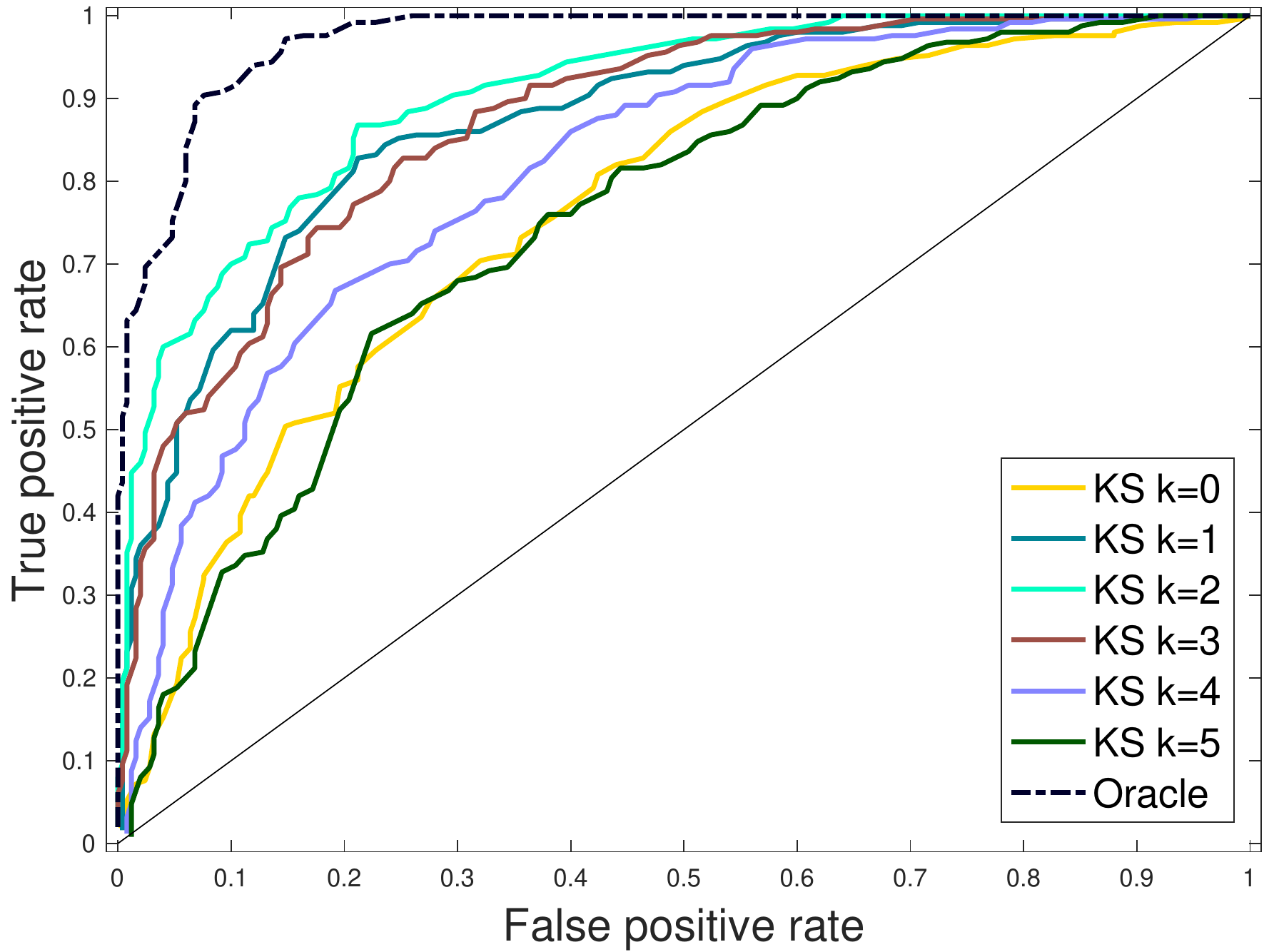}   
\caption{\it\small ROC curves from an experiment comparing the proposed
  higher-order KS tests in \eqref{eq:hks} (for various $k$) to the usual KS
  test, when $P=N(0,1)$ and $Q=N(0,1.44)$.}
\label{fig:motivation1}
\end{figure}

Figure \ref{fig:motivation2} displays the witness function (which achieves the
supremum in \eqref{eq:hks}) for a large-sample version of the higher-order KS
test, across orders $k=0$ through 5. We used the same distributions
as in Figure \ref{fig:motivation1}, but now $n=m=10^4$.  We will prove in
Section \ref{sec:computation} that, for the $k$th order test, the witness
function is always a $k$th degree piecewise polynomial (in fact, a rather
simple one, of the form \smash{$g_t(x) = (x-t)_+^k$} or \smash{$g_t(x) =
  (t-x)_+^k$} for a knot $t$).  Recall the underlying distributions $P,Q$ here
have different variances, and we can see from their witness functions that all   
higher-order KS tests choose to put weight on tail differences.  Of
course, the power of any test of is determined by the size of the statistic
under the alternative, relative to typical fluctuations under the null.  As we
place more weight on tails, in this particular setting, we see diminishing
returns at $k=3$, meaning the null fluctuations must be too great.

\begin{figure}[htb]
\includegraphics[width=\columnwidth]{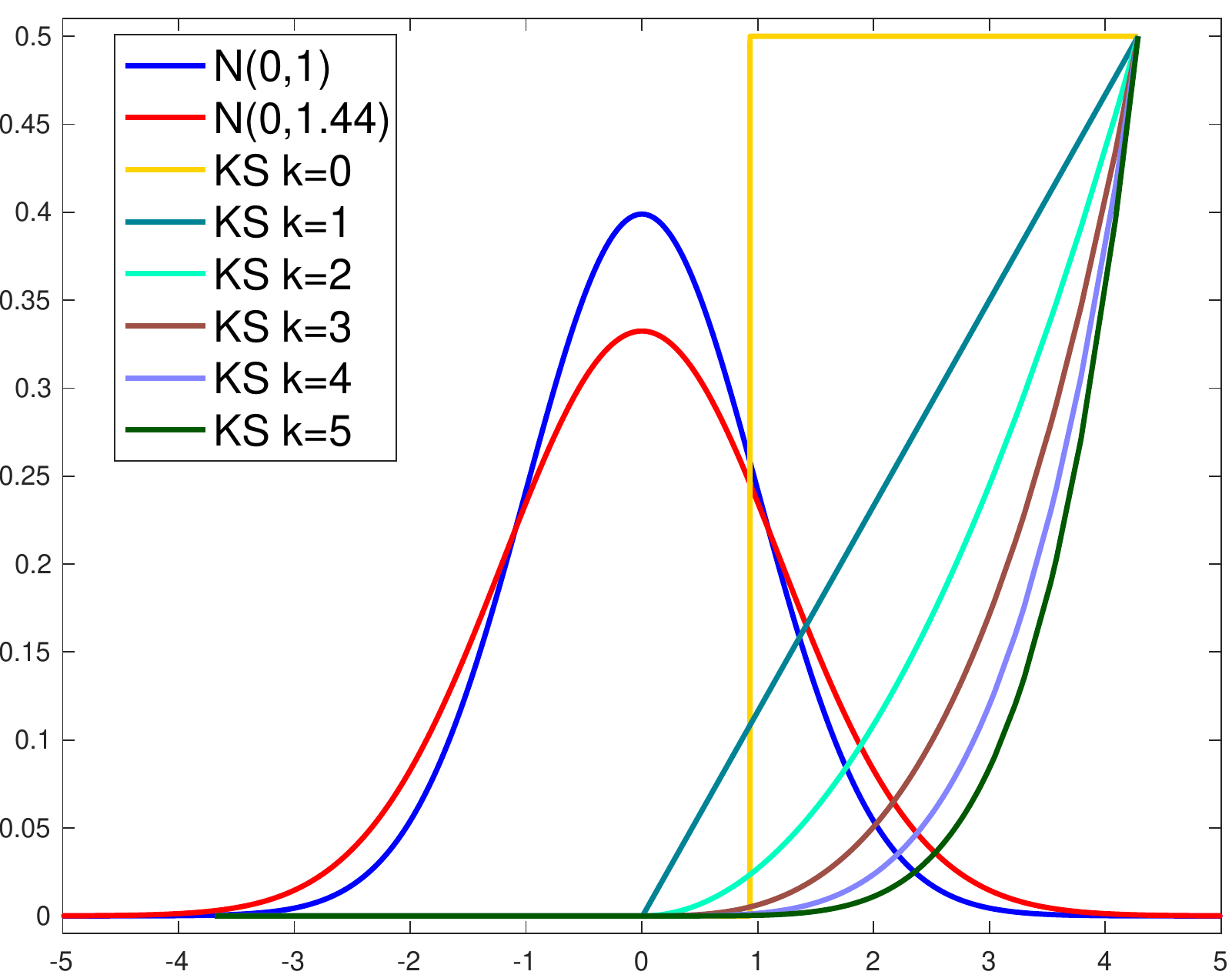}
\caption{\it\small Witness functions (normalized for plotting purposes) for the
  higher-order KS tests, when $P=N(0,1)$ and $Q=N(0,1.44)$. They are always of
  piecewise  polynomial form; and here they all place weight on tail
  differences.} 
\label{fig:motivation2}
\end{figure}

\paragraph{Summary of Contributions.} Our contributions in this work are as
follows.    

\begin{itemize}
\item We develop an exact representer theorem for the higher-order KS test
  statistic \eqref{eq:hks}.  This enables us to compute the test statistic in
  linear-time, for all $k \leq 5$.  For $k \geq 6$, we develop a nearly
  linear-time approximation to the test statistic.

\item We derive the asymptotic null distribution of the our higher-order KS test
  statistic, based on empirical process theory.  For $k \geq 6$, our 
  approximation to the test statistic has the same asymptotic null. 

\item We provide concentration tail bounds for the test statistic.  Combined
  with the metric property from Proposition \ref{prop:metric}, this shows that
  our higher-order KS test is asymptotically powerful against any pair of fixed,
  distinct distributions $P,Q$. 

\item We perform extensive numerical studies to compare the newly proposed tests 
  with several others.
\end{itemize}

\paragraph{Other Related Work.}  Recently, IPMs have been gaining in popularity
due in large part to energy distance tests
\citep{szekely2004testing,baringhaus2004new} and kernel maximum mean discrepancy
(MMD) tests \citep{gretton2012kernel}, and in fact, there is an equivalence
between the two classes \citep{sejdinovic2013equivalence}.  An IPM 
with a judicious choice of $\cF$ gives rise to a number of common distances
between distributions, such as Wasserstein distance or total variation (TV)
distance. While IPMs look at differences $dP-dQ$, tests based on
$\phi$-divergences (such as Kullback-Leibler, or Hellinger) look at ratios
$dP/dQ$, but can be hard to efficiently estimate in practice
\citep{sriperumbudur2009integral}.  The TV distance is the only IPM 
that is also a $\phi$-divergence, but it is impossible to estimate. 

There is also a rich class of nonparametric tests based on graphs.
Using minimum spanning trees, \citet{friedman1979multivariate} generalized both
the Wald-Wolfowitz runs test and the KS test. Other tests are based on k-nearest 
neighbors graphs \citep{schilling1986multivariate,henze1988multivariate} or
matchings \citep{rosenbaum2005exact}.  The Mann-Whitney-Wilcoxon test has a
multivariate generalization using the concept of data depth
\citep{liu1993quality}. \citet{bhattacharya2016power} established that many 
computationally efficient graph-based tests have suboptimal statistical
power, but some inefficient ones have optimal scalings. 

Different computational-statistical tradeoffs were also discovered for
IPMs \citep{ramdas2015adaptivity}.  Further, as noted by
\citet{janssen2000global} (in the context of one-sample testing), every
nonparametric test is essentially powerless in an infinity of directions, and
has nontrivial power only against a finite subspace of alternatives. In
particular, this implies that no single nonparametric test can uniformly
dominate all others; improved power in some directions generally implies weaker
power in others. This problem only gets worse in high-dimensional settings 
\citep{ramdas2015decreasing,arias2018remember}. Therefore, the question of which 
test to use for a given problem must be guided by a combination of
simulations, computational considerations, a theoretical understanding of the
pros/cons of each test, and a practical understanding of the data at hand. 

\paragraph{Outline.} In Section \ref{sec:computation}, we give computational
details for the higher-order KS test statistic \eqref{eq:hks}.  We derive its
asymptotic null in Section \ref{sec:null}, and give concentration bounds (for
the statistic around the population-level IPM) in Section \ref{sec:concentration}.
We give numerical experiments in Section \ref{sec:experiments}, and conclude in 
Section \ref{sec:discussion} with a discussion. 

\section{COMPUTATION} 
\label{sec:computation}

Write $T=\ipm(P_m,Q_n;\cF_k)$ for the test statistic in \eqref{eq:hks}.  In this
section, we derive a representer theorem for $T$, develop a linear-time
algorithm for $k \leq 5$, and a nearly linear-time approximation for $k \geq 6$.     

\subsection{Representer Theorem}

The higher-order KS test statistic in \eqref{eq:hks} is defined by an
infinite-dimensional maximization over $\cF_k$ in \eqref{eq:fk}.  Fortunately,
we can restrict our attention to a simpler function class, as we show next.

\begin{theorem}
\label{thm:hks_representer}
Fix $k \geq 0$.  Let \smash{$g^+_t(x) = (x-t)_+^k/k!$} and
\smash{$g^-_t(x) = (t-x)_+^k/k!$} for $t \in \R$, where we write
$(a)_+ = \max\{a,0\}$. For the statistic $T$ defined by \eqref{eq:hks},   
\begin{multline}
\label{eq:hks_representer}
T = \max\Big\{ \sup_{t \geq 0} \; |(\P_m - \Q_n) g^+_t|, \\
\sup_{t \leq 0} \; |(\P_m - \Q_n) g^-_t| \Big\}.
\end{multline}
\end{theorem}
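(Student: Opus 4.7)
The identity in \eqref{eq:hks_representer} splits into two directions. For the lower bound $T \geq (\text{RHS})$, I would first verify that each candidate function lies in the feasible class $\cF_k$. Differentiating gives $(g_t^+)^{(j)}(x) = (x-t)_+^{k-j}/(k-j)!$ for $j \leq k-1$, all of which vanish at $x = 0$ when $t \geq 0$; the $k$th derivative is the step $\one\{x > t\}$, which has $\TV = 1$ and whose limit $f^{(k)}(0-)$ equals $0$ for $t \geq 0$, satisfying the disjunctive boundary condition. A mirror check handles $g_t^-$ for $t \leq 0$. Hence every $g_t^\pm$ appearing on the RHS belongs to $\cF_k$, which immediately yields $T \geq (\text{RHS})$.

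For the reverse direction, fix $f \in \cF_k$ and set $\phi = f^{(k)}$, viewed as a right-continuous BV function with induced signed measure $\mu = d\phi$ satisfying $|\mu|(\R) \leq 1$. The plan is to derive the mixture representation
\[
f(x) = \int_{(0,\infty)} g_s^+(x)\, d\mu(s) + (-1)^{k+1} \int_{(-\infty, 0]} g_s^-(x)\, d\mu(s)
\]
in the case $\phi(0+) = 0$ (the case $\phi(0-) = 0$ being symmetric). To get there, I would iterate the fundamental theorem of calculus using $f^{(j)}(0) = 0$ for $j \leq k-1$, obtaining the Peano-type formula
\[
f(x) = \int_0^x \frac{(x-t)^{k-1}}{(k-1)!}\, \phi(t)\, dt,
\]
then substitute $\phi(t) = \mu((0, t])$ for $t > 0$ and $\phi(t) = -\mu((t, 0])$ for $t < 0$, both of which follow from $\phi(0+) = 0$ and right-continuity. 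Fubini collapses each piece to a truncated power: on $x > 0$ the inner integral yields $(x-s)_+^k/k! = g_s^+(x)$, and on $x < 0$ the analogous calculation, carrying a factor of $(-1)^{k+1}$ from the signs of $(x-t)^{k-1}$, the convention $\int_0^x = -\int_x^0$, and the sign in $\phi(t) = -\mu((t, 0])$, yields $(s-x)_+^k/k! = g_s^-(x)$.

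Given this representation, I would apply $\P_m - \Q_n$ to both sides, interchange with the integrals, and bound via the triangle inequality for signed-measure integrals:
\[
|(\P_m - \Q_n) f| \leq A^+\, |\mu|((0, \infty)) + A^-\, |\mu|((-\infty, 0]),
\]
where $A^+ = \sup_{s \geq 0} |(\P_m - \Q_n) g_s^+|$ and $A^- = \sup_{s \leq 0} |(\P_m - \Q_n) g_s^-|$. Since $|\mu|((0,\infty)) + |\mu|((-\infty,0]) = |\mu|(\R) \leq 1$, this is at most $\max(A^+, A^-)$, which is precisely the RHS of \eqref{eq:hks_representer}. Supremizing over $f \in \cF_k$ closes the inequality.

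The main obstacle I anticipate is the sign bookkeeping on the negative half-line: the Taylor kernel $(x-t)^{k-1}$ acquires sign $(-1)^{k-1}$ when $x, t < 0$, the convention $\int_0^x = -\int_x^0$ contributes another minus, and the representation of $\phi$ on $(-\infty, 0]$ carries yet another. These must combine cleanly so that the result is a signed integral against $g_s^-$ (and not a mixture of $\pm g_s^\pm$) and so that the total-mass budgets on the two half-lines sum to at most $\TV(\phi) \leq 1$. Once this bookkeeping is settled, the remainder is a standard $L^\infty$-measure duality argument.
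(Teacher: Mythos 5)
Your proposal is correct, but it takes a genuinely different route from the paper. The paper's proof invokes Theorem~1 of \citet{mammen1991nonparametric}: it replaces an arbitrary $f \in \cF_k$ by a spline interpolant $\tf$ of degree $k$ that matches $f$ (and its derivatives up to one-sided $k$th derivatives) at the finite set $Z^0_{(N)}$, with $\TV(\tf^{(k)}) \leq \TV(f^{(k)})$; since $\tf$ has finitely many knots it admits a \emph{finite} truncated-power expansion $\tf = \sum_\ell \alpha_\ell g_{t_\ell}$ with $\|\alpha\|_1 \leq 1$, after which a bookkeeping step at $t=0$ (exactly your disjunction of boundary conditions) eliminates the polynomial part and H\"older's inequality finishes. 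You instead represent \emph{every} $f \in \cF_k$ exactly, not just on $Z^0_{(N)}$, as a signed mixture $\int g_s^\pm \,d\mu(s)$ with $\mu = d(f^{(k)})$, via the Peano-kernel form of Taylor's theorem, and then run the same $L^1$--$L^\infty$ duality against the budget $|\mu|(\R) = \TV(f^{(k)}) \leq 1$. The trade-off: Mammen's lemma lets the paper work with a concrete finite-dimensional object and sidesteps the measure-theoretic bookkeeping, but imports a nontrivial external result; your argument is self-contained and makes the ``$f$ is a mixture of truncated powers'' intuition literal. A couple of small points to tie down if you write this out: (i) the $k=0$ case degenerates (your Peano formula has $(k-1)! = (-1)!$), but there the representation $f = \phi$ is immediate and Fubini is unnecessary; (ii) the choice of right- versus left-continuous representative of $\phi = f^{(k)}$ should be tied to which half of the disjunction $\phi(0+)=0$ or $\phi(0-)=0$ holds, so that $\phi(0)=0$ and the jump $\mu(\{0\})$ lands on the correct side as a $g_0^-$ (resp.\ $g_0^+$) contribution --- which is exactly the same $\alpha_0 g_0^\pm$ case split the paper handles after invoking Mammen.
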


The proof of this theorem uses a key result from 
\citet{mammen1991nonparametric}, where it is shown that we can construct a 
spline interpolant to a given function at given points, such that its
higher-order total variation is no larger than that of the original function.  

\begin{remark}
When $k=0$, note that for $t \geq 0$,
\begin{align*}
|(\P_m - \Q_n) g^+_t| 
&= \bigg|\frac{1}{m}\sum_{i=1}^m 1\{x_i > t\} -
  \frac{1}{n}\sum_{i=1}^n 1\{y_i > t\} \bigg| \\
&= \bigg|\frac{1}{m}\sum_{i=1}^m 1\{x_i \leq t\} -
  \frac{1}{n}\sum_{i=1}^n 1\{y_i \leq t\} \bigg|
\end{align*}
and similarly for $t \leq 0$, $|(\P_m - \Q_n) g^-_t|$ reduces to the same
expression in the second line above.  As we vary $t$ from $-\infty$ to 
$\infty$, this only changes at values \smash{$t \in
  Z_{(N)}$}, which shows \eqref{eq:hks_representer} and \eqref{eq:ks}
are the same, i.e., Theorem \ref{thm:hks_representer} recovers the 
equivalence between \eqref{eq:ks_var} and \eqref{eq:ks}.
\end{remark} 

\begin{remark}
For general $k \geq 0$, we can interpret \eqref{eq:hks_representer} as a
comparison between truncated $k$th order moments, between the empirical
distributions $P_m$ and $Q_n$.  The test statistic $T$ the maximum over all
possible truncation locations $t$.  The critical aspect here is {\it truncation},
which makes the higher-order KS test statistic a metric (recall Proposition
\ref{prop:metric}).  A comparison of moments, alone, would 
not be enough to ensure such a property.
\end{remark}

Theorem \ref{thm:hks_representer} itself does not immediately lead
to an algorithm for computing $T$, as the range of $t$ considered in the
suprema is infinite. However, through a bit more work, detailed in the next two
subsections, we can obtain an exact linear-time algorithm for all $k \leq 5$,  
and a linear-time approximation for $k \geq 6$.

\subsection{Linear-Time Algorithm for $k \leq 5$} 

The key fact that we will exploit is that the criterion in
\eqref{eq:hks_representer}, as a function of $t$, is a piecewise polynomial of
order $k$ with knots in \smash{$Z_{(N)}$}. Assume without a loss of generality  
that $z_1 < \cdots < z_N$.  Also assume without a loss of generality that $z_1
\geq 0$ (this simplifies notation, and the general case follows
by the repeating the same arguments separately for the points in
\smash{$Z_{(N)}$} on either side of 0).  
Define $c_i=\one\{z_i\in X_{(m)}\}/m -\one\{z_i\in Y_{(n)}\}/n$, $i\in [N]$, and   
\begin{equation}
\label{eq:phi}
\phi_i(t) = \frac{1}{k!} \sum_{j=i}^N c_j (z_j - t)^k, \;\; i \in [N]. 
\end{equation}
Then the statistic in \eqref{eq:hks_representer} can be succinctly written as
\begin{equation}
\label{eq:hks_exact_phi}
T = \max_{i \in [N]} \; \sup_{t \in [z_{i-1},z_i]} \; \phi_i(t),
\end{equation}
where we let $z_0=0$ for convenience.  Note each $\phi_i(t)$, $i \in [N]$
is a $k$th degree polynomial. We can compute a representation for these
polynomials efficiently.      

\begin{lemma}
\label{lem:phi_recurrence}
Fix $k \geq 0$. The polynomials in \eqref{eq:phi} satisfy the recurrence
relations
$$
\phi_i(t) = \frac{1}{k!}  c_i (z_i - t)^k + \phi_{i+1}(t), \;\; i\in [N]
$$
(where $\phi_{N+1}=0$).  Given the monomial expansion
$$
\vspace{-2pt}
\phi_{i+1}(t) = \sum_{\ell=0}^k a_{i+1,\ell} t^\ell,
\vspace{-2pt}
$$
we can compute an expansion for $\phi_i$, with coefficients $a_{i\ell}$, $\ell  
\in \{0\}\cup[k]$, in $O(1)$ time.  So we can compute all coefficients 
$a_{i,\ell}$, $i \in [N]$, $\ell \in \{0\}\cup[k]$ in $O(N)$ time. 
\end{lemma}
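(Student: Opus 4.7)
The recurrence itself is immediate from the definition: splitting off the $j=i$ term in the sum defining $\phi_i(t)$ gives
\[
\phi_i(t) = \tfrac{1}{k!} c_i (z_i-t)^k + \tfrac{1}{k!} \sum_{j=i+1}^N c_j(z_j-t)^k = \tfrac{1}{k!} c_i (z_i-t)^k + \phi_{i+1}(t),
\]
with the convention $\phi_{N+1} \equiv 0$. So the real content of the lemma is the statement that, given the monomial expansion of $\phi_{i+1}$, the monomial coefficients of $\phi_i$ can be obtained with only $O(1)$ arithmetic operations (where the constant depends on $k$, which is fixed).

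My plan is to prove this by directly applying the binomial theorem. Write
\[
(z_i - t)^k = \sum_{\ell=0}^k \binom{k}{\ell} (-1)^\ell z_i^{k-\ell} t^\ell,
\]
so that the new term in the recurrence expands as
\[
\tfrac{1}{k!} c_i (z_i - t)^k = \sum_{\ell=0}^k \tfrac{c_i (-1)^\ell \binom{k}{\ell} z_i^{k-\ell}}{k!}\, t^\ell.
\]
Comparing monomial coefficients on both sides of the recurrence then yields the update rule
\[
a_{i,\ell} = a_{i+1,\ell} + \tfrac{c_i (-1)^\ell \binom{k}{\ell} z_i^{k-\ell}}{k!}, \quad \ell \in \{0\} \cup [k].
\]
Computing the $k+1$ coefficients $a_{i,\ell}$ from $\phi_{i+1}$'s coefficients therefore requires only $k+1$ additions together with the evaluation of the $k+1$ correction terms; since $k$ is a fixed constant, this is $O(1)$ work. (The binomial coefficients $\binom{k}{\ell}$ and signs can be precomputed once.)

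To finish, I would run this recurrence from $i = N$ down to $i = 1$, initializing $\phi_{N+1} \equiv 0$ (i.e., $a_{N+1,\ell} = 0$ for all $\ell$). Since each of the $N$ backward steps costs $O(1)$, the total cost of producing the full table $\{a_{i,\ell}\}_{i \in [N],\, \ell \in \{0\}\cup[k]}$ is $O(N)$. There is no real obstacle here beyond bookkeeping: the only thing worth checking is that the constants hidden in $O(1)$ depend only on $k$ and not on the $z_i$'s or on $N$, which is visible from the explicit update formula above.
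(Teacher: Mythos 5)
Your proof is correct. The paper actually omits a proof of this lemma entirely (it is treated as straightforward), and your argument supplies exactly the intended bookkeeping: peel off the $j=i$ term to get the recurrence, expand $(z_i - t)^k$ by the binomial theorem to obtain the coefficient update $a_{i,\ell} = a_{i+1,\ell} + c_i (-1)^\ell \binom{k}{\ell} z_i^{k-\ell}/k!$, and observe that each backward step costs a constant (depending only on $k$) number of arithmetic operations, giving $O(N)$ total.
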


To compute $T$ in \eqref{eq:hks_exact_phi}, we must maximize
each polynomial $\phi_i$ over its domain $[z_{i-1},z_i]$, for $i \in [N]$, and then
compare maxima. Once we have computed a representation for these polynomials, 
as Lemma \ref{lem:phi_recurrence} ensures we can do in $O(N)$ time, we can use
this to analytically maximize each polynomial over its domain, provided the 
order $k$ is small enough.  Of course, maximizing a polynomial over an interval
can be reduced to computing the roots of its derivative, which is an analytic
computation for any $k \leq 5$ (since the roots of any quartic have a
closed-form, see, e.g., \citealt{rosen1995niels}). The next result summarizes.  

\begin{proposition}
\label{prop:hks_exact_k5}
For any $0 \leq k \leq 5$, the test statistic in \eqref{eq:hks_exact_phi}
can be computed in $O(N)$ time. 
\end{proposition}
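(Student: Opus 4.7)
The plan is to combine Lemma~\ref{lem:phi_recurrence} with closed-form root-finding for low-degree polynomials. First, I would invoke Lemma~\ref{lem:phi_recurrence} to compute, in $O(N)$ total time, the monomial expansions $\phi_i(t) = \sum_{\ell=0}^k a_{i,\ell} t^\ell$ for every $i \in [N]$. After that preprocessing, the formula \eqref{eq:hks_exact_phi} shows that $T$ is obtained by maximizing $N$ univariate polynomials, each of degree $k$, over their respective intervals $[z_{i-1}, z_i]$, and taking the largest value. Since the representer theorem \eqref{eq:hks_representer} actually involves an absolute value, I would in parallel also track the minimum of each $\phi_i$ on its interval (equivalently, maximize $-\phi_i$), and finally take $T$ to be the largest absolute value encountered; this only doubles the work per interval.

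The key observation for the $O(N)$ runtime is that maximizing a fixed-degree polynomial on a closed interval is an $O(1)$ operation whenever the critical points admit a closed-form expression. Given $\phi_i$ with coefficients in hand, the extrema on $[z_{i-1}, z_i]$ lie either at the two endpoints or at interior roots of $\phi_i'$, which is a polynomial of degree $k-1 \leq 4$. For $k \leq 2$, the derivative is affine or quadratic and its roots are immediate; for $k = 3$ it is quadratic; for $k = 4$ it is cubic, solvable by Cardano's formula; for $k = 5$ it is quartic, solvable by Ferrari's formula (see, e.g., \citealt{rosen1995niels}). In each case there are at most four real roots, which can be enumerated in constant time, tested for membership in $[z_{i-1}, z_i]$, and evaluated by Horner's rule in $O(k) = O(1)$ time. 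Thus the per-interval cost is $O(1)$.

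Summing over the $N$ intervals yields $O(N)$ work for the maximization phase, matching the $O(N)$ cost of the preprocessing phase, so the overall complexity is $O(N)$. The branch $t \le 0$ of the representer theorem \eqref{eq:hks_representer} is handled identically after the same index reduction used to justify the assumption $z_1 \geq 0$ in the text: we partition \smash{$Z_{(N)}$} at $0$ and run the procedure separately on each side, which does not change the linear scaling.

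The main obstacle I anticipate is not conceptual but bookkeeping: carefully verifying that (i) the closed-form quartic solver for $k=5$ returns all real critical points robustly, (ii) the absolute value in \eqref{eq:hks_representer} is correctly captured by tracking both max and min of each $\phi_i$, and (iii) the split of the sample at $0$ together with the boundary conventions (such as $z_0 = 0$) does not introduce off-by-one issues in the recurrence of Lemma~\ref{lem:phi_recurrence}. Once those details are in place, the proposition follows immediately from the constant-time-per-interval maximization argument.
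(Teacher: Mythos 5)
Your proposal is correct and takes essentially the same route as the paper: apply Lemma~\ref{lem:phi_recurrence} to get all monomial coefficients in $O(N)$ time, then maximize each degree-$k$ polynomial on its interval by finding the roots of its degree-$(k-1)$ derivative in closed form (possible for $k \leq 5$ since the derivative has degree at most $4$), giving $O(1)$ work per interval. Your additional observation that one must track both the maximum and minimum of each $\phi_i$ (to account for the absolute value in \eqref{eq:hks_representer}, which is suppressed in the display \eqref{eq:hks_exact_phi}) is a correct and useful clarification of a small imprecision in the paper's notation, and it does not change the $O(N)$ complexity.
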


Maximizing a polynomial of degree $k \geq 6$ is not generally possible in 
closed-form. However, developments in semidefinite optimization allow us to  
approximate its maximum efficiently, investigated next. 

\subsection{Linear-Time Approximation for $k \geq 6$}

Seminal work of \citet{shor1998nondifferentiable,nesterov2000squared} shows that
the problem of maximizing a polynomial over an interval can be cast as a 
semidefinite program (SDP). The number of variables in this SDP depends only on
the polynomial order $k$, and all constraint functions are self-concordant.  
Using say an interior point method to solve this SDP, therefore, leads to the 
following result.  

\begin{proposition}
\label{prop:hks_approx_k6}
Fix $k \geq 6$ and $\epsilon > 0$. For each polynomial in \eqref{eq:phi}, we can 
compute an $\epsilon$-approximation to its maximum in $c_k \log(1/\epsilon)$  
time, for a constant $c_k>0$ depending only on $k$.  As we can compute a
representation for all these polynomials in $O(N)$ time
(Lemma~\ref{lem:phi_recurrence}), this means we can compute an
$\epsilon$-approximation to the statistic in \eqref{eq:hks_representer} in $O(N
\log(1/\epsilon))$ time.  
\end{proposition}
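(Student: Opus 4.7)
The plan is to reduce the problem to a well-known polynomial optimization subroutine applied independently on each interval $[z_{i-1}, z_i]$, and then to aggregate over $i \in [N]$. By Lemma \ref{lem:phi_recurrence}, we already have in $O(N)$ time the coefficients $\{a_{i,\ell}\}_{\ell=0}^k$ for the monomial expansion of each $\phi_i$. So the remaining task is: given a univariate polynomial of degree $k$ specified by its $k+1$ coefficients, compute an $\epsilon$-approximation of its maximum over a closed interval in $c_k \log(1/\epsilon)$ time, where $c_k$ depends only on $k$.

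For the subroutine, I would invoke the SDP reformulation of \citet{nesterov2000squared} (building on \citealt{shor1998nondifferentiable}). The key classical fact is that a univariate polynomial $p$ of degree $k$ satisfies $\tau - p(t) \geq 0$ on a closed interval $[a,b]$ if and only if $\tau - p(t)$ admits a weighted sum-of-squares representation (Markov--Lukacs type), and the set of such representations is parametrized by a spectrahedron of size $O(k)$. Consequently, minimizing $\tau$ subject to this nonnegativity constraint expresses $\sup_{t\in[a,b]} p(t)$ as an SDP whose number of variables and constraint matrix dimensions depend only on $k$, equipped with a self-concordant barrier of parameter $O(k)$. Standard interior-point theory then gives an $\epsilon$-accurate solution in $O(\sqrt{k}\log(1/\epsilon))$ iterations, each of cost $\mathrm{poly}(k)$, for a total of $c_k \log(1/\epsilon)$.

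After applying this subroutine once to each $\phi_i$ restricted to $[z_{i-1},z_i]$, for $i \in [N]$, we take the maximum of the $N$ approximate values and also compare against the analogous quantity computed on the negative side of 0 (as in the proof strategy behind Proposition \ref{prop:hks_exact_k5}). Since the approximation error on each interval is at most $\epsilon$, the error of the outer maximum is at most $\epsilon$, yielding an $\epsilon$-approximation to $T$ in \eqref{eq:hks_representer}. The total runtime is $O(N) + N \cdot c_k\log(1/\epsilon) = O(N\log(1/\epsilon))$ as claimed.

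The main obstacle is not the aggregation step but rather pinning down that the per-polynomial SDP complexity genuinely depends only on $k$ and on $\log(1/\epsilon)$, independent of the interval $[z_{i-1},z_i]$ and the coefficient magnitudes. I would handle this by first applying an affine change of variable $t \mapsto a + (b-a)s$ with $s \in [0,1]$, which reduces to maximizing a degree-$k$ polynomial on $[0,1]$; and then normalizing by the maximum absolute coefficient so that the input polynomial has bounded coefficients. These transformations rescale the optimum in a controlled way and do not change the SDP dimensions, so the interior-point iteration count $O(\sqrt{k}\log(1/\epsilon))$ applies uniformly across all $i \in [N]$, giving the stated bound.
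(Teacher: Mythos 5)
Your proof follows essentially the same route as the paper's: reduce to maximizing each $\phi_i$ over $[z_{i-1},z_i]$, use the Shor/Nesterov sum-of-squares SDP reformulation (via the Markov--Luk\'acs-type certificate for nonnegativity on an interval), invoke interior-point complexity for a fixed-size SDP, and aggregate over the $N$ intervals. The paper's proof is exactly this, though it spells out the SOS-to-PSD-matrix correspondence a bit more explicitly and simply cites \citet{nesterov2000squared} for the $c_k\log(1/\epsilon)$ iteration bound.

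One thing worth flagging: you are right to worry about whether the per-interval cost is genuinely independent of the interval endpoints and coefficient magnitudes, and the paper glosses over this. However, your proposed fix (affine rescaling to $[0,1]$ plus dividing by the largest coefficient) does not fully close this. If you normalize by $M_i = \max_\ell |\tilde a_{i,\ell}|$, an $\epsilon'$-accurate solution of the normalized problem yields only an $M_i\epsilon'$-accurate answer for the original; to get absolute accuracy $\epsilon$ you need $\epsilon' = \epsilon/M_i$, which reintroduces a $\log M_i$ term into the iteration count. This is the same implicit dependence the paper carries, so it is not a gap relative to what the paper proves, but the claim that the rescaling makes the bound ``apply uniformly across all $i$'' with cost depending only on $k$ and $\log(1/\epsilon)$ is overstated as written.
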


\begin{remark}
Let $T_\epsilon$ denote the $\epsilon$-approximation from Proposition
\ref{prop:hks_approx_k6}. Under the null $P=Q$, we would need to have 
\smash{$\epsilon=o(1/\sqrt{N})$} in order for the approximation $T_\epsilon$
to share the asymptotic null distribution of $T$, as we will see in Section
\ref{sec:null_hks_approximation}. Taking say, $\epsilon=1/N$, the statistic
$T_{1/N}$ requires $O(N\log{N})$ computational time, and this is why in various  
places we make reference to a {\it nearly} linear-time approximation when $k
\geq 6$. 
\end{remark}

\subsection{Simple Linear-Time Approximation}

We conclude this section by noting a simple approximation to
\eqref{eq:hks_representer}  given by  
\begin{multline}
\label{eq:hks_approx_simple}
T^* = \max\Big\{ \max_{t \in Z^0_{(N)}, \, t \geq 0} \; 
|(\P_m - \Q_n) g^+_t|, \\   
\max_{t \in  Z^0_{(N)}, \, t \leq 0 } \; 
|(\P_m - \Q_n) g^-_t| \Big\},
\end{multline}
where \smash{$Z^0_{(N)} = \{0\} \cup Z_{(N)}$}. Clearly, for $k=0$ or 1, the
maximizing $t$ in \eqref{eq:hks_representer} must be one of the sample
points \smash{$Z_{(N)}$}, so $T^*=T$ and there is no approximation error in   
\eqref{eq:hks_approx_simple}.  For $k \geq 2$, we can control the error as
follows.  

\begin{lemma}
\label{lem:hks_approx_simple}
For $k \geq 2$, the statistics in \eqref{eq:hks_representer},
\eqref{eq:hks_approx_simple} satisfy 
$$
T - T^* = \frac{\delta_N}{(k-1)!}  \bigg( \frac{1}{m}\sum_{i=1}^m |x_i|^{k-1}
+\frac{1}{n}\sum_{i=1}^n |y_i|^{k-1} \bigg),
$$
where $\delta_N$ is the maximum gap between sorted points in
\smash{$Z^0_{(N)}$}.    
\end{lemma}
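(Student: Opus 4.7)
The plan is to bound $T - T^*$ by controlling how the objective $t \mapsto (\P_m - \Q_n)g^\pm_t$ changes across a single gap of the grid $Z^0_{(N)}$, which is feasible precisely because $k\geq 2$ makes $g^\pm_t$ continuously differentiable in $t$.

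First, by the representer formula \eqref{eq:hks_representer}, fix a maximizer $t^\circ$ of the suprema defining $T$. (Such a maximizer exists: for $t > z_N$ the function $g^+_t$ vanishes on every sample, so $\sup_{t\geq 0}|(\P_m-\Q_n)g^+_t|$ is attained on $[0,z_N]$, and similarly on the left half-line.) By symmetry, assume $t^\circ \geq 0$ and that the ``$+$'' supremum is the active one. Since $0 \in Z^0_{(N)}$, the point $t^\circ$ lies in some closed interval $[a,b]$ with $a,b \in Z^0_{(N)}\cap \R_{\geq 0}$ and $b-a \leq \delta_N$. Letting $\tilde t$ be whichever of $a,b$ is closer to $t^\circ$, one has $\tilde t \in Z^0_{(N)}$ and $|t^\circ - \tilde t| \leq \delta_N$, and therefore $T - T^* \leq |(\P_m - \Q_n)(g^+_{t^\circ} - g^+_{\tilde t})|$.

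Second, for $k \geq 2$ the map $t \mapsto (x-t)_+^k/k!$ is $C^1$ in $t$ with derivative $-(x-t)_+^{k-1}/(k-1)!$. On the segment between $t^\circ$ and $\tilde t$, both nonnegative, the bound $(x-t)_+^{k-1} \leq x_+^{k-1} \leq |x|^{k-1}$ holds pointwise in $x$, so the mean value theorem yields $|g^+_{t^\circ}(x) - g^+_{\tilde t}(x)| \leq \delta_N\,|x|^{k-1}/(k-1)!$ for every $x\in\R$. Applying linearity of $\P_m,\Q_n$ and the triangle inequality gives the stated bound on $T-T^*$. (Strictly, the argument yields ``$\leq$''; I read the ``$=$'' in the lemma as an upper bound.) The negative branch involving $g^-_t$ is symmetric, using $(t-x)_+^{k-1} \leq (-x)_+^{k-1} \leq |x|^{k-1}$ for $t \leq 0$.

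The only thing one has to watch is that the maximizer $t^\circ$ be comparable to an endpoint of $Z^0_{(N)}$ on the same side of zero; this is exactly why the construction adjoins $0$ to $Z_{(N)}$. Without this, a $t^\circ$ falling between $0$ and the smallest positive sample could not be bracketed on both sides by grid points, and the $\delta_N$ gap bound would fail. Beyond this minor point, the argument is a routine first-order Taylor estimate and I do not expect any real obstacle.
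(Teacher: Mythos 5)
Your proposal is correct and follows essentially the same route as the paper's own proof: identify the maximizer $t^\circ$ of the representer criterion, bracket it by adjacent grid points of $Z^0_{(N)}$, and use a mean-value/Lipschitz bound on $t \mapsto (\P_m - \Q_n)g^\pm_t$ with Lipschitz constant $\tfrac{1}{(k-1)!}\big(\tfrac{1}{m}\sum_i |x_i|^{k-1} + \tfrac{1}{n}\sum_i |y_i|^{k-1}\big)$. You are also right that the lemma's "$=$" should really be "$\leq$"; the paper's proof likewise establishes only an inequality.
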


\begin{remark}
We would need to have \smash{$\delta_N=o_P(1/\sqrt{N})$}
in order for $T^*$ to share the asymptotic null of $T$, see again Section
\ref{sec:null_hks_approximation} (this is assuming that $P$ has $k-1$ moments,
so the sample moments concentrate for large enough $N$).  This
will not be true of $\delta_N$, the maximum gap, in general.  But it does hold
when $P$ is continuous, having compact support, and a density bounded from below 
on its support; here, in fact, $\delta_N=o_P(\log{N}/N)$ (see, e.g., 
\citealt{wang2014falling}).    
\end{remark}

Although it does not have the strong guarantees of the approximation from 
Proposition \ref{prop:hks_approx_k6}, the statistic in
\eqref{eq:hks_approx_simple} is simple and efficient---we must emphasize that it
can be computed in $O(N)$ linear time, as a consequence of Lemma 
\ref{lem:phi_recurrence} (the evaluations of $\phi_i(t)$ at the sample points
\smash{$t \in Z_{(N)}$} are the constant terms $a_{i0}$, $i \in [N]$ in their
monomial expansions)---and is likely a good choice for most practical purposes.   


\section{ASYMPTOTIC NULL} 
\label{sec:null}

To study the asymptotic null distribution of the proposed higher-order KS test,
we will appeal to uniform central limit theorems (CLTs) from the empirical
process theory literature, reviewed here for completeness.  
For functions $f,g$ in a class $\cF$, let $\G_{P,\cF}$ denote a Gaussian process
indexed by $\cF$ with mean and covariance
\begin{gather*}
 \E( \G_{P,\cF} f ) = 0, \;\; f \in \cF, \\
\Cov ( \G_{P,\cF} f, \, \G_{P,\cF} g ) =
\Cov_{X \sim P}(f(X), g(X)), \;\; f,g \in \cF. 
\end{gather*}
For functions $l,u$, let $[l,u]$ denote the set of functions $\{ f : l(x) \leq
f(x) \leq u(x), \; \text{for all $x$} \}$.  Call $[l,u]$ a {\it bracket} of size 
$\| u-l \|_2$, where $\|\cdot\|_2$ denotes the $L_2(P)$ norm, defined as 
$$
\|f\|_2^2 = \int f(x)^2 \, dP(x).
$$
Finally, let \smash{$N_{[]}(\epsilon, \|\cdot\|_2, \cF)$} be the smallest number 
of $\epsilon$-sized brackets that are required to cover $\cF$. Define the
bracketing integral of $\cF$ as 
$$
J_{[]}(\|\cdot\|_2, \cF) = \int_0^1
 \sqrt{\log N_{[]} (\epsilon, \|\cdot\|_2, \cF)} \, d\epsilon.
$$
Note that this is finite when \smash{$\log N_{[]}(\epsilon, \|\cdot\|_2,
  \cF)$} grows slower than $1/\epsilon^2$. We now state an important
uniform CLT from empirical process theory.  

\begin{theorem}[Theorem 11.1.1 in \citealt{dudley1999uniform}]
\label{thm:null_gen}
If $\cF$ is a class of functions with finite bracketing integral,
then when $P=Q$ and $m,n \to\infty$, the process 
$$
\sqrt{\frac{mn}{m+n}} \{ \P_m f - \Q_n f \}_{f \in \cF}
$$
converges weakly to the Gaussian process \smash{$\G_{P,\cF}$}. Hence,
$$
\sqrt{\frac{mn}{m+n}} \sup_{f\in \cF} \; |\P_m f - \Q_n f |
\overset{d}{\to} \sup_{f\in \cF} \; |\G_{P,\cF} f|. 
$$
\end{theorem}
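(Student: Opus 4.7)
The plan is to reduce this two-sample uniform CLT to the one-sample bracketing CLT of empirical process theory (the one-sample analogue of the stated result), which I would invoke as a black box: any class $\cF$ with finite bracketing integral $J_{[]}(\|\cdot\|_2,\cF)$ is $P$-Donsker, i.e.\ $\sqrt{m}(\P_m - P)$ converges weakly in $\ell^\infty(\cF)$ to the centered Gaussian process $\G_{P,\cF}$ with covariance $\Cov_P(f(X),g(X))$.

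Under the null $P=Q$, I would decompose
$$
\sqrt{\frac{mn}{m+n}} (\P_m f - \Q_n f) \,=\, \alpha_N \sqrt{m}(\P_m - P)f \,-\, \beta_N \sqrt{n}(\Q_n - P)f,
$$
where $\alpha_N = \sqrt{n/(m+n)}$ and $\beta_N = \sqrt{m/(m+n)}$, so that the two-sample process becomes a linear combination of two one-sample processes evaluated on independent samples. By the one-sample bracketing CLT, each of $\sqrt{m}(\P_m - P)$ and $\sqrt{n}(\Q_n - P)$ converges weakly in $\ell^\infty(\cF)$ to a copy of $\G_{P,\cF}$, and independence of the $X$ and $Y$ samples upgrades this to joint weak convergence to two independent copies $\G^{(1)},\G^{(2)}$ of $\G_{P,\cF}$. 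Since $\alpha_N^2+\beta_N^2=1$, I can pass to any subsequence along which $(\alpha_N,\beta_N)\to(\alpha,\beta)$ on the unit circle and apply the continuous mapping theorem to the jointly continuous map $(c_1,c_2,F_1,F_2)\mapsto c_1 F_1 - c_2 F_2$ on $\R^2\times\ell^\infty(\cF)^2$, obtaining convergence along the subsequence to $\alpha\G^{(1)} - \beta\G^{(2)}$.

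The key observation is that this limit does not depend on $(\alpha,\beta)$: since $\G^{(1)},\G^{(2)}$ are independent,
$$
\Cov\bigl(\alpha\G^{(1)} f - \beta\G^{(2)} f,\,\alpha\G^{(1)} g - \beta\G^{(2)} g\bigr) \,=\, (\alpha^2+\beta^2)\,\Cov_P(f(X),g(X)) \,=\, \Cov_P(f(X),g(X)),
$$
so $\alpha\G^{(1)} - \beta\G^{(2)}$ is itself a copy of $\G_{P,\cF}$ regardless of the subsequential limit. This forces the full sequence to converge weakly to $\G_{P,\cF}$. The supremum statement then follows by the continuous mapping theorem applied to the $1$-Lipschitz functional $F\mapsto \sup_{f\in\cF}|F(f)|$ on $\ell^\infty(\cF)$. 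The main obstacle is entirely concealed inside the bracketing CLT itself, whose proof requires chaining and maximal inequalities to establish asymptotic equicontinuity of the one-sample process under the $J_{[]}$ condition; given that result, the two-sample reduction here is a short Slutsky/subsequence argument that exploits the crucial identity $\alpha_N^2+\beta_N^2=1$.
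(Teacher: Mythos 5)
The paper does not prove this result at all: it is invoked as a citation to Theorem~11.1.1 of \citet{dudley1999uniform}, so there is no in-paper proof to compare against. That said, your reduction of the two-sample uniform CLT to the one-sample bracketing (Ossiander) CLT is the standard route and is correct. The decomposition with $\alpha_N=\sqrt{n/(m+n)}$, $\beta_N=\sqrt{m/(m+n)}$ is exact, independence of the two samples gives joint weak convergence to $(\G^{(1)},\G^{(2)})$ in $\ell^\infty(\cF)^2$, and the compactness/subsequence device cleanly avoids any assumption that $m/n$ converges (which some textbook statements impose but the paper's statement does not). The crucial observation that $\alpha\G^{(1)}-\beta\G^{(2)}\overset{d}{=}\G_{P,\cF}$ for any $(\alpha,\beta)$ with $\alpha^2+\beta^2=1$ is exactly what makes the subsequential limits collapse to a single law, and the final step via the $1$-Lipschitz map $F\mapsto\sup_{f\in\cF}|F(f)|$ on $\ell^\infty(\cF)$ is standard. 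The only things swept under the rug are the usual empirical-process measurability caveats (outer probability, Hoffmann--J{\o}rgensen weak convergence) and the fact that finiteness of $J_{[]}$ also ensures $\cF$ is totally bounded in $L_2(P)$ so that $\G_{P,\cF}$ admits a tight, sample-bounded version -- both of which are legitimately part of the black-boxed one-sample theorem. In short: correct, and an appropriate proof of a result the paper itself takes on citation.
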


\subsection{Bracketing Integral Calculation}
\label{sec:bracket}

To derive the asymptotic null of the higher-order KS test, based on its
formulation in \eqref{eq:hks}, and Theorem \ref{thm:null_gen}, we would need to
bound the bracketing integral of $\cF_k$.  While there are well-known entropy
(log covering) number bounds for related function classes (e.g.,
\citealt{birman1961piecewise,babenko1979theoretical}), and the conversion from
covering to bracketing numbers is standard, these results unfortunately require
the function class to be uniformly bounded in the sup norm, which is certainly
not true of $\cF_k$.

Note that the representer result in \eqref{eq:hks_representer} can be written as 
$T=\rho(P_m,Q_n;\cG_k)$, where 
\begin{equation}
\label{eq:gk} 
\cG_k = \{ g^+_t : t \geq 0 \} \cup \{ g^-_t : t \leq 0 \}.
\end{equation} 
We can hence instead apply Theorem \ref{thm:null_gen} to $\cG_k$, whose
bracketing number can be bounded by direct calculation, assuming enough moments
on $P$. 

\begin{lemma}
\label{lem:gk_bracket}
Fix $k \geq 0$. Assume {$\E_{X \sim P} |X|^{2k+\delta} \leq M < \infty$}, for
some $\delta > 0$. For the class $\cG_k$ in \eqref{eq:gk}, there is a constant
$C>0$ depending only on $k,\delta$ such that  
$$
 \log N_{[]}(\epsilon, \|\cdot\|_2, \cG_k) \leq 
C \log \frac{M^{1+ \frac{\delta(k-1)}{2k+\delta}}}{\epsilon^{2+\delta}}.
$$
\end{lemma}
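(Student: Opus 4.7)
The plan is to exploit monotonicity of the representer family. By a symmetry between the two halves (reflecting $x \mapsto -x$ shows that the bracketing number of $\{g_t^- : t \le 0\}$ under $P$ equals that of $\{g_t^+ : t \ge 0\}$ under the pushforward of $P$ by $-\,\cdot$, which enjoys the same moment bound), it suffices to bound $N_{[]}(\epsilon, \|\cdot\|_2, \cG_k^+)$ for $\cG_k^+ = \{g_t^+ : t \ge 0\}$. For $0 \le s \le t$ one has $g_s^+ \ge g_t^+$ pointwise, so this family is monotonically non-increasing in $t$. This invites brackets of the nested form $[g_{t_{i+1}}^+, g_{t_i}^+]$ on a finite grid $0 = t_0 < t_1 < \cdots < t_N = T$, together with a single tail bracket $[0, g_T^+]$ that captures all $g_t^+$ with $t \ge T$.

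The size of each nested bracket rests on the Lipschitz-type estimate
\begin{equation*}
0 \le g_s^+(x) - g_t^+(x) \le \frac{(t-s)(x-s)_+^{k-1}}{(k-1)!}, \qquad 0 \le s \le t,
\end{equation*}
which I would verify by a short case analysis on $x \le s$, $s \le x \le t$ (using $(x-s)/k \le t-s$), and $x \ge t$ (mean value theorem applied to $u \mapsto (x-u)^k$). Taking $L_2(P)$ norms and bounding $(X-s)_+^{2(k-1)} \le |X|^{2(k-1)}$ for $s \ge 0$ yields
\begin{equation*}
\|g_s^+ - g_t^+\|_2 \le \frac{t-s}{(k-1)!}\bigl(\E_{X \sim P}|X|^{2(k-1)}\bigr)^{1/2},
\end{equation*}
after which monotonicity of $L^p$ norms gives $(\E|X|^{2(k-1)})^{1/2} \le M^{(k-1)/(2k+\delta)}$. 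A grid spacing of order $\epsilon\, M^{-(k-1)/(2k+\delta)}$ therefore makes each nested bracket of size at most $\epsilon$, producing $O(T M^{(k-1)/(2k+\delta)}/\epsilon)$ grid points on $[0,T]$.

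For the tail bracket, Markov's inequality on $|X|^\delta$ gives
\begin{equation*}
\|g_T^+\|_2^2 = \frac{\E(X-T)_+^{2k}}{(k!)^2} \le \frac{\E[|X|^{2k}(|X|/T)^\delta]}{(k!)^2} \le \frac{M}{(k!)^2\, T^\delta},
\end{equation*}
so choosing $T \asymp (M/\epsilon^2)^{1/\delta}$ forces $\|g_T^+\|_2 \le \epsilon$. Combining, the total bracket count is at most a $k$-dependent constant times $M^{1/\delta + (k-1)/(2k+\delta)}\, \epsilon^{-1 - 2/\delta}$. Taking logarithms and using the identities $1/\delta + (k-1)/(2k+\delta) = [1 + \delta(k-1)/(2k+\delta)]/\delta$ and $1 + 2/\delta = (2+\delta)/\delta$ lets one absorb a factor of $1/\delta$ into an overall constant $C$, yielding precisely the claimed form $C\log\bigl(M^{1+\delta(k-1)/(2k+\delta)}/\epsilon^{2+\delta}\bigr)$.

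The main obstacle is making these two tradeoffs align cleanly and checking that the resulting exponents on $M$ and $\epsilon$ collapse into the stated single-log expression; the two identities above are what makes this work, and they are specific to the coupling between the truncation level $T$ and the grid density. Minor side matters include the degenerate case $k=0$ (where $g_t^+$ is an indicator and classical bracketing bounds for monotone families apply directly, replacing the Lipschitz argument) and the symmetry reduction to $\cG_k^+$, which at worst doubles the bracket count and is absorbed in $C$ after taking logs.
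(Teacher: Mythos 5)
Your argument is correct and follows essentially the same route as the paper: a uniform grid of nested brackets on $[0,T]$ plus a single tail bracket $[0,g_T^+]$, with the grid spacing governed by a Lipschitz bound of order $M^{(k-1)/(2k+\delta)}$ and the cutoff $T$ governed by a Markov-type tail bound of order $M/T^\delta$. The only cosmetic differences are that you obtain the $L_2$ spacing bound via pointwise domination $(X-s)_+ \le |X|$ followed by Lyapunov's inequality, whereas the paper applies H\"older's inequality directly with exponents $p=(2k+\delta)/(2k-2)$, $q=(2k+\delta)/(2+\delta)$; both yield the same power of $M$ and the same final bracket count $O\bigl(M^{1/\delta+(k-1)/(2k+\delta)}\epsilon^{-1-2/\delta}\bigr)$.
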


\subsection{Asymptotic Null for Higher-Order KS}
\label{sec:null_hks}

Applying Theorem \ref{thm:null_gen} and Lemma \ref{lem:gk_bracket} to the
higher-order KS test statistic \eqref{eq:hks_representer} leads to the following
result. 

\begin{theorem}
\label{thm:null_hks}
Fix $k \geq 0$. Assume {$\E_{X \sim P} |X|^{2k+\delta} < \infty$}, for some
$\delta > 0$. When $P=Q$, the test statistic in \eqref{eq:hks_representer} 
satisfies, as $m,n \to \infty$,  
$$
\sqrt{\frac{mn}{m+n}} T \overset{d}{\to} \sup_{g \in \cG_k} \; | \G_{P,k} g |,  
$$
where \smash{$\G_{P,k}$} is an abbreviation for the Gaussian process indexed  
by the function class $\cG_k$ in \eqref{eq:gk}.
\end{theorem}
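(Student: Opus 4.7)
The plan is to combine three earlier results: the representer theorem (Theorem \ref{thm:hks_representer}), the bracketing-entropy bound (Lemma \ref{lem:gk_bracket}), and the uniform CLT (Theorem \ref{thm:null_gen}).

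First, by Theorem \ref{thm:hks_representer} I rewrite
\[
T \;=\; \ipm(P_m, Q_n; \cG_k) \;=\; \sup_{g \in \cG_k} |(\P_m - \Q_n) g|,
\]
where $\cG_k$ is the one-parameter family of truncated polynomials in \eqref{eq:gk}. This collapses the supremum from the infinite-dimensional class $\cF_k$ down to a class that is amenable to empirical process tools.

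Second, I verify the hypothesis of Theorem \ref{thm:null_gen} applied with $\cF = \cG_k$, namely that $J_{[]}(\|\cdot\|_2, \cG_k) < \infty$. Lemma \ref{lem:gk_bracket} gives
\[
\log N_{[]}(\epsilon, \|\cdot\|_2, \cG_k) \;\le\; C \log M' \,+\, C(2+\delta) \log(1/\epsilon)
\]
for a constant $M'$ depending only on $M$, $k$, $\delta$. Taking square roots and integrating over $\epsilon \in (0,1)$, the integrand has only a polylogarithmic singularity at $0$; the change of variables $u = \log(1/\epsilon)$ reduces the integral to $\int_0^\infty \sqrt{a + bu}\, e^{-u}\, du < \infty$. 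Applying Theorem \ref{thm:null_gen} to $\cG_k$ under $P = Q$ then yields
\[
\sqrt{\tfrac{mn}{m+n}} \, T \;\overset{d}{\to}\; \sup_{g \in \cG_k} |\G_{P,\cG_k} g|,
\]
which is the claim upon writing $\G_{P,k}$ for $\G_{P,\cG_k}$.

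In this sense Theorem \ref{thm:null_hks} is essentially a corollary of the three prior results, and the only real obstacle has already been cleared in Lemma \ref{lem:gk_bracket}. Note that $\cG_k$ contains unbounded functions growing like $|x|^k$, so off-the-shelf bracketing-entropy estimates for uniformly bounded function classes (e.g., bounded-variation functions on a compact interval) do not directly apply; this is precisely why the $2k+\delta$ moment assumption is needed, and why the bracketing bound had to be established by a direct truncation-plus-covering argument on $\R$. Once that bound is in hand, the remainder of the present proof is just bookkeeping.
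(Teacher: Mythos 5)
Your proposal is correct and follows essentially the same route as the paper's own proof: identify $T$ with the supremum over $\cG_k$ via the representer theorem, invoke Lemma \ref{lem:gk_bracket} to show the log bracketing number grows only logarithmically in $1/\epsilon$ so the bracketing integral is finite, and then apply the uniform CLT of Theorem \ref{thm:null_gen} to $\cG_k$. The only difference is that you spell out the change-of-variables calculation verifying finiteness of $\int_0^1 \sqrt{\log(1/\epsilon)}\,d\epsilon$, whereas the paper simply remarks that logarithmic growth is slower than $1/\epsilon^2$ and hence suffices.
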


\begin{remark}
When $k=0$, note that for $t \geq s \geq 0$, the covariance function is  
$$
\Cov_{X \sim P}(1\{X>s\}, 1\{X>t\}) = F_P(s)(1-F_P(t)), 
$$
where $F_P$ denotes the CDF of $P$. For $s \leq t \leq 0$, the covariance
function is again equal to \smash{$F_P(s)(1-F_P(t))$}.  The supremum of this
Gaussian process over $t \in \R$ is that of a Brownian bridge, so Theorem 
\ref{thm:null_hks} recovers the well-known asymptotic null distribution of the
KS test, which (remarkably) does not depend on $P$.  
\end{remark}

\begin{remark}
When $k \geq 1$, it is not clear how strongly the supremum of the Gaussian
process from Theorem \ref{thm:null_hks} depends on $P$; it appears it must
depend on the first $k$ moments of $P$, but is not clear whether it {\it only}
depends on these moments.  Section \ref{sec:experiments} investigates
empirically.  Currently, we do not have a precise understanding of whether the  
asymptotic null is useable in practice, and we suggest using a permutation null 
instead. 
\end{remark}

\subsection{Asymptotic Null Under Approximation}
\label{sec:null_hks_approximation}

The approximation from Proposition \ref{prop:hks_approx_k6} shares the same 
asymptotic null, provided $\epsilon>0$ is small enough.

\begin{corollary}
\label{cor:null_hks_approx_k6}
Fix $k \geq 0$. Assume {$\E_{X \sim P} |X|^{2k+\delta} < \infty$}, for some
$\delta > 0$. When $P=Q$, as $m,n \to \infty$ such that $m/n$ converges to a
positive constant, the test statistic $T_\epsilon$ from Proposition
\ref{prop:hks_approx_k6} converges at a \smash{$\sqrt{N}$}-rate to the supremum
of the same Gaussian process in Theorem \ref{thm:null_hks}, provided 
\smash{$\epsilon=o(1/\sqrt{N})$}.    
\end{corollary}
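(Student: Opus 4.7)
The plan is to combine Theorem \ref{thm:null_hks}, which gives the $\sqrt{N}$-rate null limit for the exact statistic $T$, with a deterministic bound on $|T_\epsilon - T|$ coming from Proposition \ref{prop:hks_approx_k6}, and then invoke Slutsky's theorem.

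First I would note that by the representation in \eqref{eq:hks_exact_phi} (and its analogue on the negative side of $0$), the exact statistic has the form $T = \max_{i \in [N]} M_i$, where $M_i = \sup_{t \in [z_{i-1}, z_i]} |\phi_i(t)|$ is the supremum of a degree-$k$ polynomial on an interval. The approximation $T_\epsilon$ is defined by replacing each $M_i$ with the $\epsilon$-accurate SDP approximation $\widehat M_i$ from Proposition \ref{prop:hks_approx_k6}, which satisfies $|\widehat M_i - M_i| \le \epsilon$ deterministically. Taking maxima preserves the error, so
$$
|T_\epsilon - T| \;=\; \bigl|\max_i \widehat M_i - \max_i M_i\bigr| \;\le\; \max_i |\widehat M_i - M_i| \;\le\; \epsilon.
$$

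Next, under the assumption $m/n \to c > 0$, we have $\sqrt{mn/(m+n)} = \Theta(\sqrt{N})$. Theorem \ref{thm:null_hks}, whose moment hypothesis is exactly the one assumed here, therefore gives $\sqrt{mn/(m+n)}\, T \overset{d}{\to} \sup_{g \in \cG_k} |\G_{P,k} g|$, which is the claimed $\sqrt{N}$-rate statement for $T$. To transfer this to $T_\epsilon$, I would write
$$
\sqrt{\tfrac{mn}{m+n}}\, T_\epsilon \;=\; \sqrt{\tfrac{mn}{m+n}}\, T \;+\; \sqrt{\tfrac{mn}{m+n}}\,(T_\epsilon - T),
$$
and observe that the remainder term is bounded in absolute value by $\sqrt{mn/(m+n)} \cdot \epsilon = O(\sqrt{N}) \cdot o(1/\sqrt{N}) = o(1)$ under the hypothesis $\epsilon = o(1/\sqrt{N})$. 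Slutsky's theorem then yields the same weak limit for $\sqrt{mn/(m+n)}\, T_\epsilon$ as for $\sqrt{mn/(m+n)}\, T$.

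There is no serious obstacle here beyond unpacking what ``$\epsilon$-approximation to the statistic'' means: the only subtle point is that one must apply the per-polynomial SDP approximation coordinate-wise over the $N$ subintervals and then observe that the max-operation is $1$-Lipschitz in the sup-norm, so the error does not accumulate with $N$. Once that is noted, Slutsky immediately converts the deterministic $O(\epsilon)$ error bound into the desired distributional conclusion.
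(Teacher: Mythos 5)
Your proposal is correct and follows essentially the same route as the paper: write $\sqrt{mn/(m+n)}\,T_\epsilon = \sqrt{mn/(m+n)}\,T + \sqrt{mn/(m+n)}\,(T_\epsilon - T)$, bound $|T_\epsilon - T| \le \epsilon$, note the remainder is $O(\sqrt{N})\cdot o(1/\sqrt{N}) = o(1)$, and invoke Slutsky together with Theorem~\ref{thm:null_hks}. The only difference is cosmetic: the paper assumes a one-sided bound $0 \le T - T_\epsilon \le \epsilon$, whereas you use the two-sided bound via the $1$-Lipschitzness of the max, which is a slightly cleaner justification but reaches the same conclusion.
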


The approximation in \eqref{eq:hks_approx_simple} shares the same asymptotic
null, provided $P$ is continuous with compact support.

\begin{corollary}
\label{cor:null_hks_approx_simple}
Fix $k \geq 0$. Assume that $P$ is continuous, compactly supported, with density
bounded from below on its support.  When $P=Q$, as $m,n \to \infty$ such that 
$m/n$ converges to a positive constant, the test statistic $T^*$ in
\eqref{eq:hks_approx_simple} converges at a  \smash{$\sqrt{N}$}-rate to the
supremum of the same Gaussian process in Theorem \ref{thm:null_hks}.   
\end{corollary}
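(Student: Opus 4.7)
The plan is to deduce the corollary from Theorem~\ref{thm:null_hks} combined with Lemma~\ref{lem:hks_approx_simple}, tied together by Slutsky's theorem. Since $P$ is compactly supported, it has finite moments of all orders, so in particular $\E_{X\sim P}|X|^{2k+\delta}<\infty$ for every $\delta>0$, and Theorem~\ref{thm:null_hks} yields
$$
\sqrt{\tfrac{mn}{m+n}}\,T \;\overset{d}{\to}\; \sup_{g\in\cG_k}|\G_{P,k}g|.
$$
Because $m/n$ tends to a positive constant, $\sqrt{mn/(m+n)} \asymp \sqrt{N}$, so it suffices to show that $\sqrt{N}(T-T^*) = o_P(1)$ and then apply Slutsky.

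For this, I would invoke Lemma~\ref{lem:hks_approx_simple}, which represents the gap exactly as
$$
T - T^* \;=\; \frac{\delta_N}{(k-1)!}\bigg(\frac{1}{m}\sum_{i=1}^m|x_i|^{k-1} + \frac{1}{n}\sum_{i=1}^n|y_i|^{k-1}\bigg),
$$
and then control the two factors separately. Compact support makes $|x_i|^{k-1}$ and $|y_i|^{k-1}$ uniformly bounded, so the two sample averages are deterministically $O(1)$ (in fact they converge a.s.\ to $\E|X|^{k-1}$). The substantive step is bounding $\delta_N$, the largest gap in the sorted version of $Z^0_{(N)}=\{0\}\cup Z_{(N)}$. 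Under the hypothesis that $P$ has a density bounded below by some $c>0$ on its compact support, map the samples through $F_P$ to obtain i.i.d.\ uniform variables on $[0,1]$; the marginal distribution of each adjacent spacing is $\mathrm{Beta}(1,N)$, and a union bound against its exponential tail gives that the maximum uniform spacing is $O_P(\log N/N)$. Since $f_P \geq c$ makes $F_P^{-1}$ Lipschitz with constant $1/c$, the original sample spacings inherit the same $O_P(\log N/N)$ rate, and appending the single point $0$ does not change this (assuming $0$ lies in, or at the boundary of, the support of $P$, which is implicit in the setup).

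Putting the pieces together,
$$
\sqrt{N}\,(T - T^*) \;=\; O_P\!\Big(\sqrt{N}\cdot\tfrac{\log N}{N}\Big) \;=\; O_P\!\Big(\tfrac{\log N}{\sqrt{N}}\Big) \;=\; o_P(1),
$$
and combined with the weak convergence of $\sqrt{mn/(m+n)}\,T$ from Theorem~\ref{thm:null_hks}, Slutsky delivers the claimed limit for $T^*$. The main analytical obstacle is the uniform-spacings estimate; it is a classical result in the extreme-value theory of order statistics, but the argument is clean only because the density lower bound prevents large ``empty'' sub-intervals where $T-T^*$ could be inflated. Everything else is bookkeeping: bounding bounded random variables and checking that the normalization is the right order in $N$.
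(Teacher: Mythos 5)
Your proposal is correct and follows essentially the same route as the paper's: express the gap via Lemma~\ref{lem:hks_approx_simple}, show $\sqrt{N}(T-T^*)=o_P(1)$ using a maximum-spacing bound, and conclude by Slutsky from Theorem~\ref{thm:null_hks}. The only difference is that you derive the $\delta_N = O_P(\log N/N)$ estimate from scratch (via the $\mathrm{Beta}(1,N)$ marginal of uniform spacings, a union bound, and the Lipschitz inverse CDF), whereas the paper simply cites Lemma~5 of \citet{wang2014falling} for this fact; your self-contained argument is sound and, incidentally, your observation that the estimate requires $0$ to lie in the (closed) support of $P$ is a genuine implicit assumption that the paper does not spell out.
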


\section{TAIL CONCENTRATION}
\label{sec:concentration}

We examine the convergence of our test statistics to their population
analogs.  In general, if the population-level IPM $\ipm(P,Q;\cF_k)$ is large, 
then the concentration bounds below will imply that the empirical statistic 
$\ipm(P_m,Q_n;\cF_k)$ will be large for $m,n$ sufficiently large, and the
test will have power. 

We first review the necessary machinery, again from empirical process
theory. For $p \geq 1$, and a function $f$ of a random variable $X \sim
P$, recall the $L_p(P)$ norm is defined as $\|f\|_p = [\E(f(X)^p)]^{1/p}$.  For 
$p > 0$, recall the {\it exponential Orlicz norm} of order $p$ is defined as 
$$
\|f\|_{\Psi_p} = \inf \big\{ t > 0: \E [\exp(|X|^p/t^p)] -1 \leq 1\big\}.
$$
(These norms depend on the measure $P$, since they are defined in terms of 
expectations with respect to $X \sim P$, though this is not explicit in our
notation.)

We now state an important concentration result. 

\begin{theorem}[Theorems 2.14.2 and 2.14.5 in \citealt{vandervaart1996weak}]
\label{thm:concentration_gen} 
Let $\cF$ be a class functions with an envelope function $F$, i.e., $f \leq
F$ for all $f \in \cF$. Define
$$
W = \sqrt{n} \sup_{f \in \cF} \;| \P_n f - \P f |,
$$
and abbreviate \smash{$J=J_{[]}(\|\cdot\|,\cF)$}. For $p \geq 2$, if $\|F\|_p < 
\infty$, then for a constant $c_1>0$,  
$$
[\E(W^p)]^{1/p} \leq c_1 \Big(\|F\|_2 J + n^{-1/2+1/p} \|F\|_p\Big),
$$
and for $0 < p \leq 1$, if \smash{$\|F\|_{\Psi_p} < \infty$}, then for a
constant $c_2>0$, 
$$
\|W\|_{\Psi_p} \leq  c_2 \Big(\|F\|_2  J + n^{-1/2} (1+\log n)^{1/p}
\|F\|_{\Psi_p}\Big). 
$$
\end{theorem}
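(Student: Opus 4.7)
The plan is to prove both moment bounds via bracketing chaining, combined with symmetrization and adaptive truncation at the envelope $F$; the architecture is classical, so I sketch the main pieces and isolate where care is needed. First I would symmetrize: by the standard symmetrization inequality the moments of $W$ are comparable, up to universal constants, to those of $W' = n^{-1/2} \sup_{f \in \cF} |\sum_{i=1}^n \xi_i f(X_i)|$, where $\xi_1, \ldots, \xi_n$ are i.i.d.\ Rademacher signs independent of the data; all further bounds are applied to $W'$ conditionally on the data and then integrated out.

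Next I would set up the chaining over nested bracketing covers. Fix the geometric sequence $\epsilon_k = 2^{-k}$ and let $\{[l_{k,j}, u_{k,j}]\}_{j=1}^{N_k}$ be a minimal $\epsilon_k$-bracketing of $\cF$ in $L_2(P)$, with $N_k = N_{[]}(\epsilon_k, \|\cdot\|_2, \cF)$. Assigning to each $f$ a bracket representative $\pi_k f$ at every level, telescope $f = \pi_{k_0} f + \sum_{k > k_0}(\pi_k f - \pi_{k-1} f)$, plus a tail vanishing in $L_2$, and apply the symmetrized process to each increment. Each $g = \pi_k f - \pi_{k-1} f$ satisfies $\|g\|_2 \lesssim \epsilon_{k-1}$ and $|g| \leq 2F$, and at most $N_k^2$ such increments arise at level $k$.

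The third ingredient is a maximal inequality for Rademacher sums over a finite family. For $p \geq 2$, conditional Bernstein gives for each fixed $g$ a sub-Gaussian tail at scale $\|g\|_2$ plus a sub-exponential correction controlled by $\|g\|_\infty/\sqrt{n}$. Combined with $\|\max_{j \leq M} Z_j\|_p \leq M^{1/p}\max_j \|Z_j\|_p$ and a level-dependent truncation of each increment at the envelope $F$, the contribution at level $k$ becomes of order $\epsilon_{k-1}\sqrt{\log N_k}\,\|F\|_2 + n^{-1/2+1/p}\|F\|_p (\log N_k)^{1/p}$. Summing over $k$ and recognizing $\sum_k \epsilon_{k-1}\sqrt{\log N_k}$ as (a constant times) the bracketing integral $J$ yields $c_1(\|F\|_2 J + n^{-1/2+1/p}\|F\|_p)$. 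For $0 < p \leq 1$ the chaining is identical, but the maximal inequality is replaced by the Orlicz analogue $\|\max_j Z_j\|_{\Psi_p} \lesssim (\log(1+M))^{1/p}\max_j \|Z_j\|_{\Psi_p}$ together with a Bernstein-type control in Orlicz norm; the $(1+\log n)^{1/p}$ factor emerges from truncating the envelope at a threshold of order $(\log n)^{1/p}\|F\|_{\Psi_p}$.

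The hard part will be choosing the level-dependent truncation thresholds so that (i) the main sub-Gaussian/chaining contribution aggregates cleanly to $\|F\|_2 J$, and (ii) the residual from truncation collapses into a single $\|F\|_p$ or $\|F\|_{\Psi_p}$ term with the stated power of $n$ and $\log n$, rather than accumulating a factor of the number of chaining levels. Getting the Orlicz case to produce exactly the $(1+\log n)^{1/p}$ prefactor, and no more, is the most delicate calibration, and it is precisely where the two statements of Theorems 2.14.2 and 2.14.5 diverge in technique.
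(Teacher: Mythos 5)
This statement is not proved in the paper: it is a direct citation of Theorems 2.14.2 and 2.14.5 of van der Vaart and Wellner (1996), and the paper simply invokes it as a black box for the concentration argument in Theorem \ref{thm:concentration_hks}. There is therefore no in-paper proof to compare your sketch against; the relevant reference is the proof in the cited monograph.

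Measured against that reference, your outline captures the right overall skeleton (chaining over nested bracketing covers, a Bernstein-type maximal inequality at each level, and an adaptive truncation at the envelope producing the $n^{-1/2+1/p}\|F\|_p$ and $(1+\log n)^{1/p}\|F\|_{\Psi_p}$ terms), but the opening move is off. You begin by symmetrizing to a Rademacher process $W'=n^{-1/2}\sup_f|\sum_i\xi_i f(X_i)|$ and then propose to chain $W'$ using $L_2(P)$ bracketing covers. This is the methodology for \emph{uniform (covering) entropy} bounds, not bracketing ones. In the bracketing setting, once you condition on the data to exploit the Rademacher structure, the natural scale of an increment $g=\pi_k f-\pi_{k-1}f$ is its empirical $L_2$ norm $(n^{-1}\sum_i g(X_i)^2)^{1/2}$, which is a random quantity and can be far larger than $\|g\|_{L_2(P)}\lesssim\epsilon_{k-1}$; your claim that the level-$k$ contribution ``becomes of order $\epsilon_{k-1}\sqrt{\log N_k}\,\|F\|_2$'' silently replaces the empirical scale by the population one, which is exactly the step bracketing is designed to avoid justifying. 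The actual proof of Theorems 2.14.2 and 2.14.5 does not symmetrize at all: it chains the centered process $\mathbb{G}_n f=\sqrt{n}(\P_n-\P)f$ directly, bounds each increment via a Bernstein-type maximal inequality (Lemma 2.2.10 / Lemma 2.14.3 in the reference), and handles the unbounded envelope by splitting each $f-\pi_q f$ into a part where the bracket width $\Delta_q f=u_q-l_q$ is below a level-dependent threshold (controlled by Bernstein with variance $\lesssim\epsilon_q^2$) and a part where it exceeds the threshold (controlled deterministically by $\sqrt{n}\,\P\,\Delta_q f\mathds{1}\{\Delta_q f>a_q\}$ and the envelope's moment or Orlicz norm). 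So your sketch would need a genuine repair at the start---either drop symmetrization and chain $\mathbb{G}_n$ directly with the bracket-width trick, or add a substantive argument relating empirical $L_2$ radii to $L_2(P)$ bracket sizes---before the truncation calibration you rightly flag as delicate even comes into play.
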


The two-sample test statistic $T=\ipm(P_m,Q_n;\cG_k)$ satisfies (following by a
simple argument using convexity)
$$
| T- \ipm(P,Q;\cF_k)| \leq \ipm(P,P_m;\cF_k) + \ipm(Q,Q_n;\cF_k). 
$$
The terms on the right hand side can each be bounded by Theorem
\ref{thm:concentration_gen}, where we can use the envelope function
$F(x)=|x|^k/k!$ for $\cG_k$.  Using Markov's inequality, we can then get a tail
bound on the statistic.   

\begin{theorem}
\label{thm:concentration_hks}
Fix $k \geq 0$. Assume that $P,Q$ both have $p$ moments, where $p \geq 2$ and 
$p > 2k$. For the statistic in \eqref{eq:hks_representer}, for any $\alpha > 0$,
with probability $1-\alpha$,   
$$
|T- \ipm(P,Q; \cG_k)| \leq c(\alpha) \bigg(\frac{1}{\sqrt{m}} +
\frac{1}{\sqrt{n}} \bigg), 
$$ 
where \smash{$c(\alpha)=c_0 \alpha^{-1/p}$}, and $c_0>0$ is a constant.  If $P,Q$
both have finite exponential Orlicz norms of order $0 < p \leq 1$, then the
above holds for \smash{$c(\alpha)=c_0(\log(1/\alpha))^{1/p}$}.    
\end{theorem}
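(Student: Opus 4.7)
The plan is to reduce the two-sample concentration to one-sample concentration by triangle inequality, apply Theorem \ref{thm:concentration_gen} to the representer class $\cG_k$ from \eqref{eq:gk}, and then convert the resulting moment or Orlicz-norm bound into a probability bound via Markov's inequality (respectively the standard Orlicz tail estimate). By Theorem \ref{thm:hks_representer}, $T = \ipm(P_m, Q_n; \cG_k)$, and the triangle-inequality bound stated immediately before the theorem gives
\[
|T - \ipm(P,Q;\cG_k)| \leq \ipm(P,P_m;\cG_k) + \ipm(Q,Q_n;\cG_k),
\]
so it suffices to control each one-sample deviation at rate $1/\sqrt{m}$ or $1/\sqrt{n}$ and combine by a union bound.

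To invoke Theorem \ref{thm:concentration_gen} on the one-sample process indexed by $\cG_k$, I would use the envelope $F(x) = |x|^k/k!$: for $t \geq 0$ we have $(x-t)_+^k \leq x_+^k \leq |x|^k$, and symmetrically $(t-x)_+^k \leq |x|^k$ for $t \leq 0$, so $F$ dominates every $g_t^\pm \in \cG_k$. The hypothesis $p > 2k$ ensures $\|F\|_2 < \infty$ and also, by taking $\delta = p - 2k > 0$, verifies the moment assumption of Lemma \ref{lem:gk_bracket}, which yields $\log N_{[]}(\epsilon, \|\cdot\|_2, \cG_k) \lesssim \log(1/\epsilon)$ and hence a finite bracketing integral $J := J_{[]}(\|\cdot\|_2, \cG_k)$. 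The stated polynomial-moment assumption (respectively Orlicz-norm assumption) on $P,Q$ supplies the remaining ingredient, namely $\|F\|_p < \infty$ (respectively $\|F\|_{\Psi_p} < \infty$).

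Applying Theorem \ref{thm:concentration_gen} to $W_m = \sqrt{m}\, \ipm(P,P_m;\cG_k)$ then bounds $[\E(W_m^p)]^{1/p}$ by a constant depending only on $\|F\|_2$, $\|F\|_p$, and $J$; Markov's inequality $\P(W_m > t) \leq \E(W_m^p)/t^p$ converts this into $\ipm(P,P_m;\cG_k) \leq c_0 \alpha^{-1/p}/\sqrt{m}$ with probability at least $1-\alpha/2$, and the analogous estimate for $W_n$ combined with a union bound yields the claim with $c(\alpha) = c_0 \alpha^{-1/p}$ (absorbing constants). In the Orlicz case, the bound on $\|W_m\|_{\Psi_p}$ combined with the standard tail inequality $\P(|W_m|>t) \leq 2\exp(-(t/\|W_m\|_{\Psi_p})^p)$ gives $c(\alpha) = c_0(\log(1/\alpha))^{1/p}$. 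The main technical obstacle is the moment bookkeeping: one must consistently interpret the $p$-moment hypothesis so that $\|F\|_p$ (or $\|F\|_{\Psi_p}$), $\|F\|_2$, and the bracketing integral from Lemma \ref{lem:gk_bracket} are all finite under the single stated assumption; once this calibration is made, the remainder of the argument is a direct invocation of the empirical-process theorem and Markov's inequality.
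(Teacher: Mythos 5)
Your proposal matches the paper's proof step for step: the triangle inequality splits the two-sample deviation into two one-sample terms, Theorem \ref{thm:concentration_gen} is applied to $\cG_k$ with envelope $F(x)=|x|^k/k!$ and the bracketing bound of Lemma \ref{lem:gk_bracket}, and then Markov's inequality (respectively the Orlicz tail bound) together with a union bound delivers the stated rate. The moment-calibration concern you flag at the end is real---for $k\geq 2$, $\|F\|_p < \infty$ requires $\E|X|^{kp}<\infty$, strictly more than the literally-stated ``$p$ moments''---but this is equally present, and equally unaddressed, in the paper's own proof, which simply asserts the envelope condition; the intended reading is evidently that the envelope lies in $L_p$.
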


When we assume $k$ moments, the population IPM for $\cF_k$ also has a 
representer in $\cG_k$; by Proposition \ref{prop:metric}, this implies
$\ipm(\cdot,\cdot\,;\cG_k)$ is also a metric.    

\begin{corollary}
\label{cor:hks_representer_pop}
Fix $k \geq 0$. Assuming $P,Q$ both have $k$ moments, $\ipm(P,Q;\cF_k)
= \ipm(P,Q;\cG_k)$.  Therefore, by Proposition \ref{prop:metric},
$\ipm(\cdot,\cdot\,;\cG_k)$ is a metric (over the space of distributions $P,Q$  
with $k$ moments). 
\end{corollary}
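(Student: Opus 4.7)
\textbf{Proof plan for Corollary \ref{cor:hks_representer_pop}.} The plan is to prove the equality $\rho(P,Q;\cF_k) = \rho(P,Q;\cG_k)$ by establishing both inequalities, and then deduce the metric property directly from Proposition~\ref{prop:metric}.

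For the easy direction $\rho(P,Q;\cG_k) \le \rho(P,Q;\cF_k)$, I would first verify the inclusion $\cG_k \subset \cF_k$. For $t \geq 0$, the function $g_t^+(x) = (x-t)_+^k/k!$ has $(g_t^+)^{(k)}(x) = \one\{x > t\}$, which has total variation equal to $1$ and satisfies the pinning conditions of \eqref{eq:fk}: all derivatives of order $<k$ vanish at $0$ because $(0-t)_+ = 0$ when $t\ge 0$, and $(g_t^+)^{(k)}(0+) = 0$ (for $t>0$) or $(g_0^+)^{(k)}(0-) = 0$ (for $t=0$). The analogous check works for $g_t^-$ with $t \leq 0$. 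Note that $P g_s^\pm$ and $Q g_s^\pm$ are finite under the moment assumption, since $|g_s^\pm(x)| \le |x|^k/k! + |s|^k/k!$.

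For the harder direction $\rho(P,Q;\cF_k) \le \rho(P,Q;\cG_k)$, the plan is to produce an integral representation of an arbitrary $f\in\cF_k$ in terms of functions in $\cG_k$. Using the pinning conditions $f^{(j)}(0)=0$ for $j=0,\dots,k-1$, Taylor's formula with integral remainder gives
\begin{equation*}
f(x) = \int_0^x \frac{(x-t)^{k-1}}{(k-1)!}\, f^{(k)}(t)\, dt
\end{equation*}
for every $x\in\R$ (with the convention that $\int_0^x = -\int_x^0$ when $x<0$). Since $f^{(k)}$ has bounded variation, it induces a signed Lebesgue-Stieltjes measure $\nu = df^{(k)}$ with $|\nu|(\R)\le 1$. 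Writing $f^{(k)}$ in terms of $\nu$ on the two half-lines (using whichever of $f^{(k)}(0+)=0$ or $f^{(k)}(0-)=0$ holds, and absorbing the jump at $0$ as a coefficient on $g_0^+$ or $g_0^-$), and interchanging the order of integration via Fubini, yields
\begin{equation*}
f(x) = \int_{(0,\infty)} g_t^+(x)\, d\mu_+(t) + \int_{(-\infty,0)} g_t^-(x)\, d\mu_-(t) + \alpha g_0^+(x) + \beta g_0^-(x),
\end{equation*}
where $|\mu_+|((0,\infty)) + |\mu_-|((-\infty,0)) + |\alpha| + |\beta| \le \TV(f^{(k)}) \le 1$. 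Applying $\P - \Q$ to both sides and swapping the linear functional with the integral against $d\mu_\pm$ (justified by Fubini using the envelope $|x|^k/k!$ and the assumed $k$ moments on $P,Q$), I obtain
\begin{equation*}
|(\P-\Q)f| \;\le\; \Bigl(|\mu_+| + |\mu_-| + |\alpha| + |\beta|\Bigr)\cdot \sup_{g\in\cG_k} |(\P-\Q)g| \;\le\; \rho(P,Q;\cG_k).
\end{equation*}
Taking the supremum over $f\in\cF_k$ yields the desired inequality.

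Given the equality, the metric property is immediate: by Proposition~\ref{prop:metric}, $\rho(P,Q;\cG_k) = \rho(P,Q;\cF_k) = 0$ if and only if $P=Q$, and symmetry and the triangle inequality are built into the IPM definition \eqref{eq:ipm}. The main obstacle is Step 2, the representation formula: the ``or'' condition in \eqref{eq:fk} creates an asymmetry between the two half-lines, which forces the extra $g_0^\pm$ boundary terms and requires bookkeeping so that the total variation budget of $f^{(k)}$ is exactly accounted for. Once this is handled, the remainder of the argument is essentially a Fubini computation controlled by the $k$-moment assumption.
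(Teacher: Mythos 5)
Your proposal is correct, and it takes a genuinely different route from the paper. The paper's proof is indirect and sampling-based: it (tacitly) assumes a maximizer $f$ of $|(\P-\Q)f|$ over $\cF_k$ exists, fixes sample paths along which the strong law of large numbers holds, invokes the empirical representer result (Theorem~\ref{thm:hks_representer}) at each sample size $m,n$ to obtain a representer knot $t_{mn}$, and then uses Bolzano--Weierstrass on $\{t_{mn}\}$ (separating bounded from unbounded subsequences) to extract a limiting knot and hence a population-level $g\in\cG_k$ with $(\P-\Q)g=(\P-\Q)f$. Your approach is a direct analytic argument that bypasses the empirical representer entirely: the pinning conditions and $\TV(f^{(k)})<\infty$ make $f^{(k-1)}$ absolutely continuous, so Taylor's formula with integral remainder applies; writing $f^{(k)}$ on each half-line via the Lebesgue--Stieltjes measure $\nu = df^{(k)}$ and using Fubini (justified by the uniform envelope $|g|\le|x|^k/k!$ over $\cG_k$ together with the $k$-moment assumption) expresses $f$ as an integral of $g^\pm_t$ against a signed measure of total mass at most $1$, and linearity plus H\"older give the inequality. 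Your route is cleaner on two counts: it does not require existence of a maximizer over $\cF_k$, and it replaces the compactness and knot-convergence bookkeeping with a single Fubini computation. The one piece you sketch but do not execute---the boundary bookkeeping forced by the asymmetric ``$f^{(k)}(0+)=0$ or $f^{(k)}(0-)=0$'' condition---is routine once you fix whichever one-sided limit vanishes and let the jump of $f^{(k)}$ at $0$ produce the coefficient on $g^+_0$ or $g^-_0$; the total-variation budget then matches the $\ell_1$ bound exactly, as you anticipate.
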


Putting this metric property together with Theorem \ref{thm:concentration_hks}
gives the following. 

\begin{corollary}
\label{cor:powerful}
Fix $k \geq 0$.  For $\alpha_N = o(1)$ and $1/\alpha_N=o(N^{p/2})$,
reject when the higher-order KS test statistic \eqref{eq:hks_representer}
satisfies \smash{$T > c(\alpha_N)(1/\sqrt{m} + 1/\sqrt{n})$}, 
where $c(\cdot)$ is as in Theorem \ref{thm:concentration_hks}.  For any
$P,Q$ that meet the moment conditions of Theorem \ref{thm:concentration_hks},
as $m,n \to \infty$ in such a way that $m/n$ approaches a positive constant, we
have type I error tending to 0, and power tending to 1, i.e., the higher-order
KS test is {\em asymptotically powerful}.    
\end{corollary}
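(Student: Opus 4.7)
The plan is to obtain both the vanishing Type I error and the power-to-one conclusion as direct consequences of the concentration inequality in Theorem~\ref{thm:concentration_hks}, combined with the metric property $\ipm(\cdot,\cdot\,;\cG_k)$ from Corollary~\ref{cor:hks_representer_pop}. The rejection threshold $\tau_N := c(\alpha_N)(1/\sqrt{m}+1/\sqrt{n})$ is engineered exactly to sit between $0$ (the null value of $\ipm(P,Q;\cG_k)$) and any positive fixed alternative value, once we check that $\tau_N \to 0$.

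First I would verify the order of the threshold. In the polynomial-moment case $c(\alpha_N)=c_0 \alpha_N^{-1/p}$, and the assumption $1/\alpha_N = o(N^{p/2})$ gives $c(\alpha_N) = o(\sqrt{N})$; combined with $m/n$ tending to a positive constant, this yields $\tau_N = o(1)$. In the Orlicz case the corresponding $c(\alpha_N)=c_0(\log(1/\alpha_N))^{1/p}$ grows only polylogarithmically, and $\alpha_N\to 0$ again forces $\tau_N=o(1)$.

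Next, for the Type I error: under $P=Q$, $\ipm(P,Q;\cG_k)=0$, so Theorem~\ref{thm:concentration_hks} applied with level $\alpha_N$ gives $T \le \tau_N$ with probability at least $1-\alpha_N$. Hence the Type I error is at most $\alpha_N \to 0$.

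For the power, assume $P \neq Q$ satisfying the moment hypotheses. By Corollary~\ref{cor:hks_representer_pop} the population-level statistic equals $\ipm(P,Q;\cF_k)$, and by Proposition~\ref{prop:metric} this is a strictly positive constant, call it $\Delta$. Applying Theorem~\ref{thm:concentration_hks} again, with probability at least $1-\alpha_N$,
\[
T \;\ge\; \Delta - \tau_N.
\]
Since $\tau_N = o(1)$ and $\Delta$ is fixed, for all $N$ large enough we have $\Delta - \tau_N > \tau_N$, so the rejection event $\{T>\tau_N\}$ occurs with probability at least $1-\alpha_N \to 1$. Combining the two displays gives asymptotic power $1$, completing the proof.

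The only subtle point, and the main thing to be careful about, is checking the rate condition $c(\alpha_N)=o(\sqrt{N})$ in the polynomial tail regime; this is precisely why the hypothesis on $\alpha_N$ is written as $1/\alpha_N=o(N^{p/2})$. Everything else is a direct plug-in of the earlier results, with no new calculation required beyond invoking the triangle inequality implicit in Theorem~\ref{thm:concentration_hks}.
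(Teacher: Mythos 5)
Your proof is correct and follows essentially the same route as the paper's: apply Theorem~\ref{thm:concentration_hks} to bound Type I error by $\alpha_N \to 0$, use Corollary~\ref{cor:hks_representer_pop} with Proposition~\ref{prop:metric} to ensure $\ipm(P,Q;\cG_k) > 0$ under the alternative, and check that the threshold $c(\alpha_N)(1/\sqrt{m}+1/\sqrt{n})$ vanishes under the stated rate conditions on $\alpha_N$. Your write-up is actually a bit more explicit than the paper's (notably the $T \ge \Delta - \tau_N > \tau_N$ step, and the separate check for the Orlicz case), but the underlying argument is identical.
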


\section{NUMERICAL EXPERIMENTS}
\label{sec:experiments}

We present numerical experiments that examine the convergence of our test 
statistic to its asymptotic null, its power relative to other general purpose
nonparametric tests, and its power when $P,Q$ have densities with local
differences. Experiments comparing to the MMD test with a polynomial kernel are
deferred to the appendix.  

\paragraph{Convergence to Asymptotic Null.} In Figure \ref{fig:null}, we plot 
histograms of finite-sample higher-order KS test statistics and their
asymptotic null distributions, when $k=1,2$.  We considered both $P=N(0,1)$ and
\smash{$P=\mathrm{Unif}(-\sqrt{3},\sqrt{3})$} (the uniform distribution
standardized to have mean 0 and variance 1).  For a total of 1000 repetitions,
we drew two sets of samples from $P$, each of size $m=n=2000$, then computed the
test statistics. For a total of 1000 times, we also approximated the supremum of
the Gaussian process from Theorem \ref{thm:null_hks} via discretization.  We 
see that the finite-sample statistics adhere closely to their asymptotic
distributions. Interestingly, we also see that the distributions look roughly
similar across all four cases considered. Future work will examine more
thoroughly.

\begin{figure}[htb]
\centering
\includegraphics[width=0.485\columnwidth]{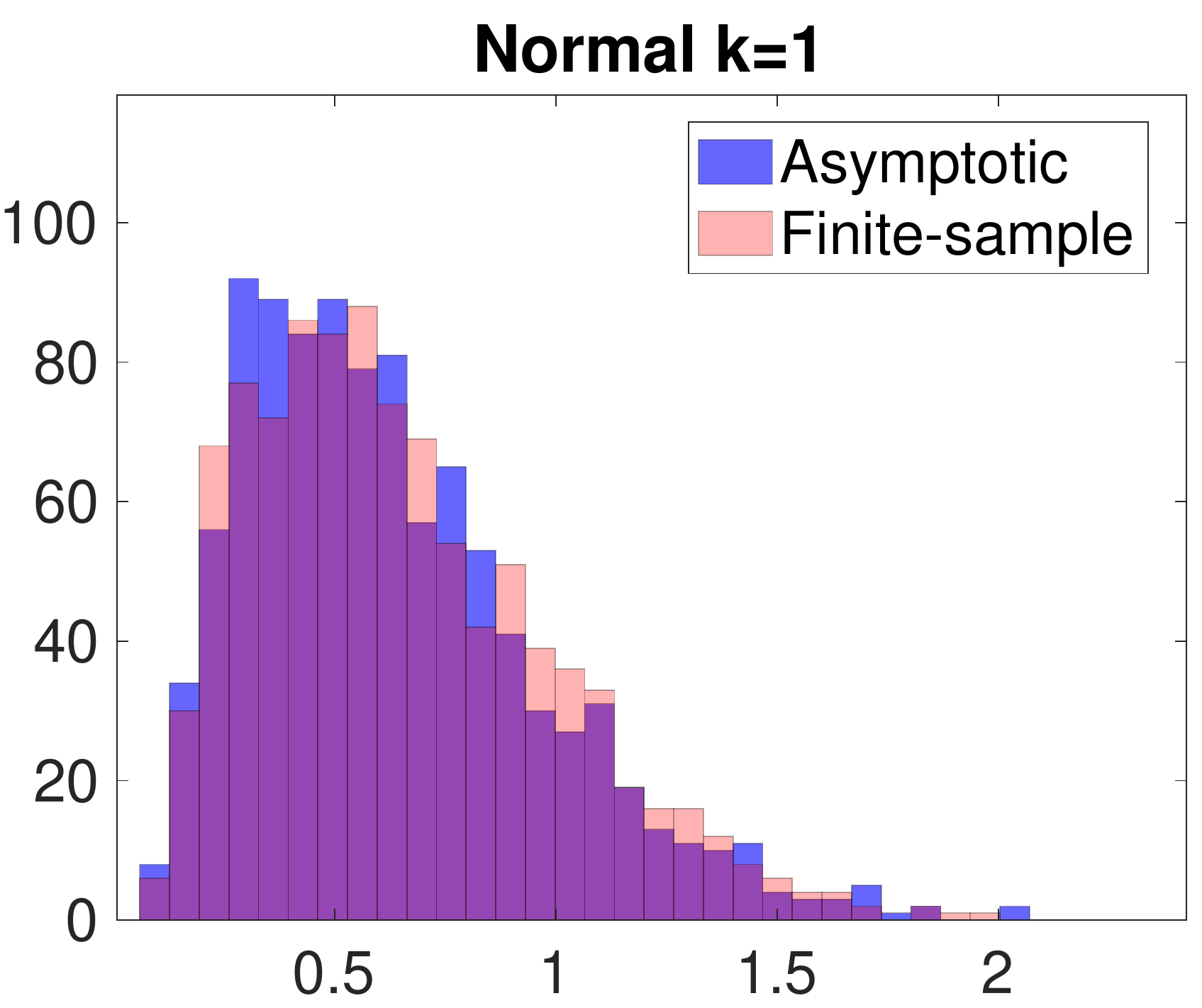}
\includegraphics[width=0.485\columnwidth]{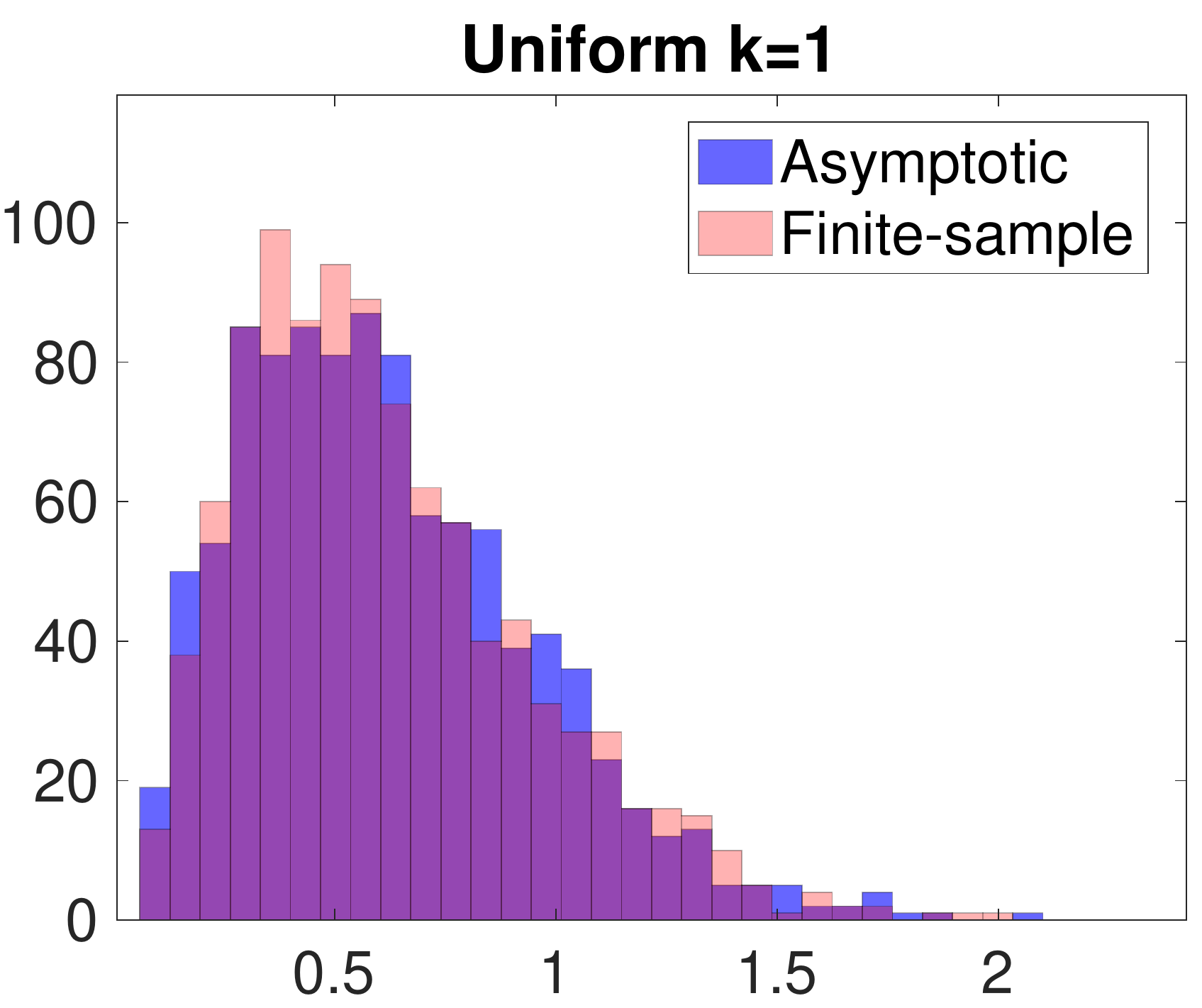}
\includegraphics[width=0.485\columnwidth]{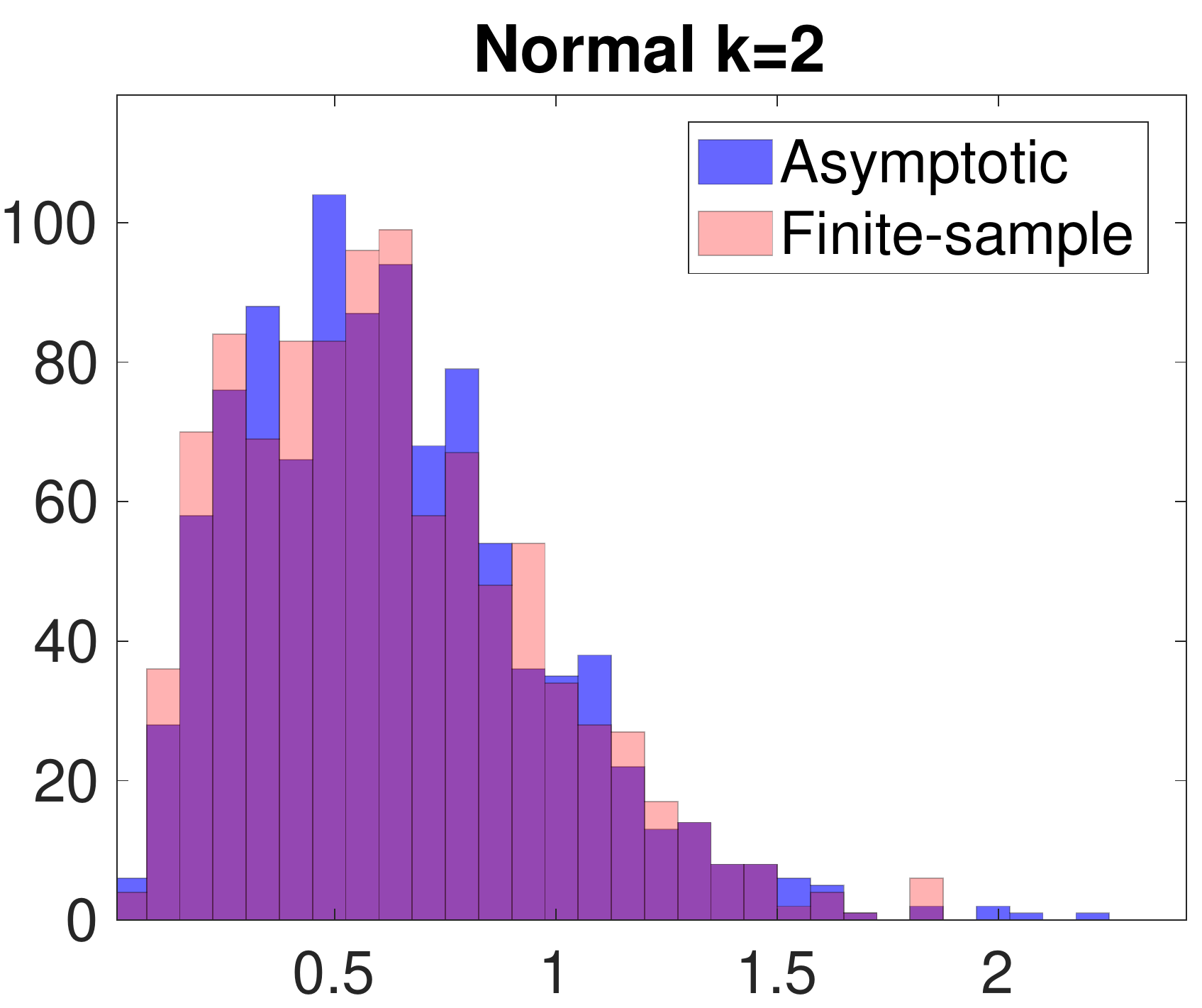}
\includegraphics[width=0.485\columnwidth]{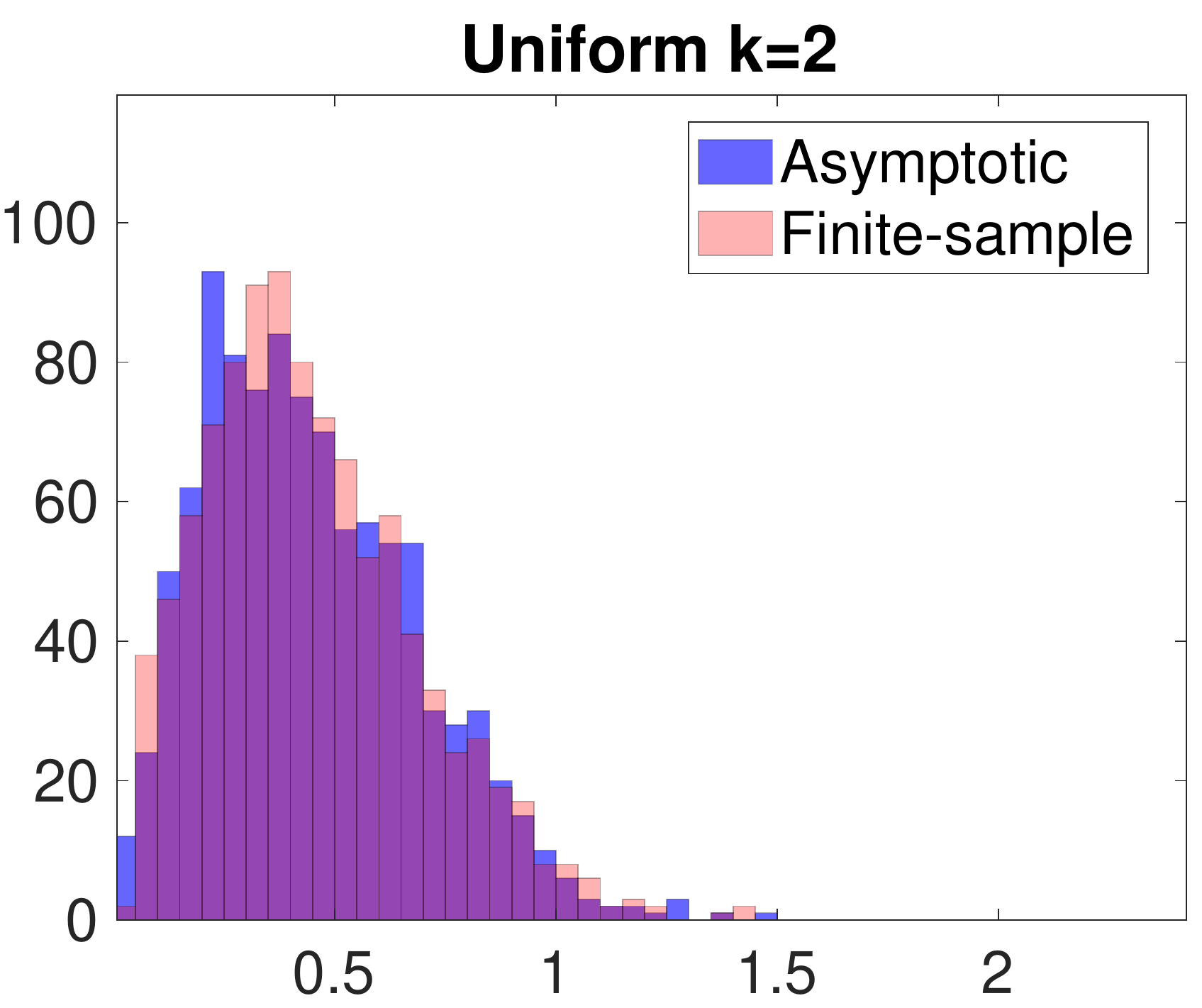}
\caption{\it\small Histograms comparing finite-sample test statistics to their
  asymptotic null distribution.} 
\label{fig:null}
\vspace{-4pt}
\end{figure}

\paragraph{Comparison to General-Purpose Tests.}

In Figures \ref{fig:gen1} and \ref{fig:gen2}, we compare the higher-order KS
tests to the KS test, and other widely-used nonparametric tests from the literature: 
the kernel maximum mean discrepancy (MMD) test \citep{gretton2012kernel} with a
Gaussian kernel, the energy distance test \citep{szekely2004testing}, and the
Anderson-Darling test \citep{anderson1954test}.  The simulation setup is
the same as that in the introduction, where we considered $P,Q$ with different 
variances, except here we study different means: $P=N(0,1)$, 
$Q=N(0.2,1)$, and different third moments: $P=N(0,1)$,
\smash{$Q=t(3)$}, where $t(3)$ denotes Student's t-distribution with 3 degrees
of freedom.  The higher-order KS tests generally perform favorably, and in each
setting there is a choice of $k$ that yields better power than KS.  In the mean
difference setting, this is $k=1$, and the power degrades for $k=3,5$, likely
because these tests are ``smoothing out'' the mean difference too much; see
Proposition \ref{prop:hks_integrated_cdfs}.  

\begin{figure}[htb]
\centering
\includegraphics[width=0.815\columnwidth]{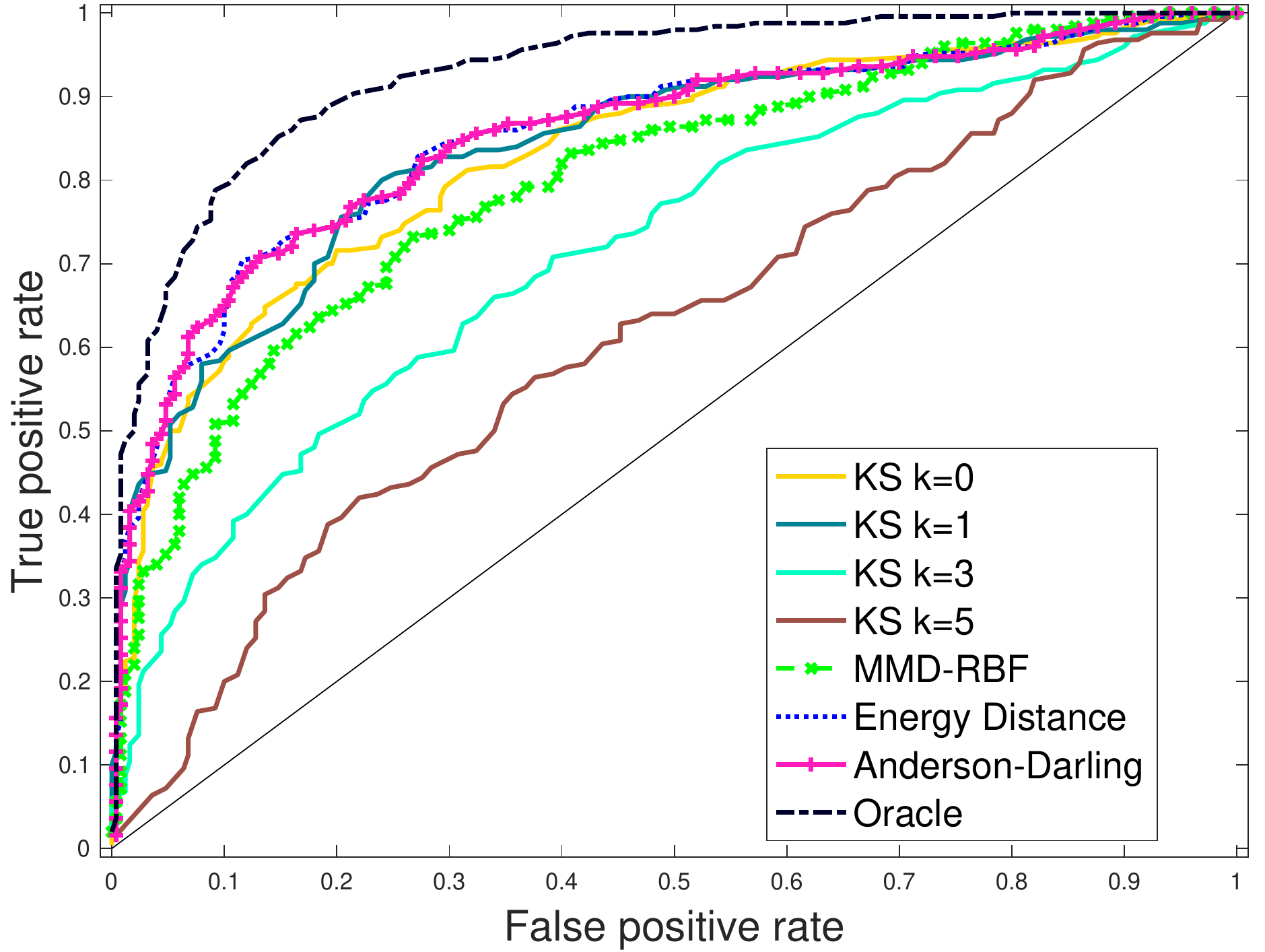}
\caption{\it\small ROC curves for $P=N(0,1)$, $Q=N(0.2,1)$.} 
\label{fig:gen1}

\smallskip\smallskip
\includegraphics[width=0.815\columnwidth]{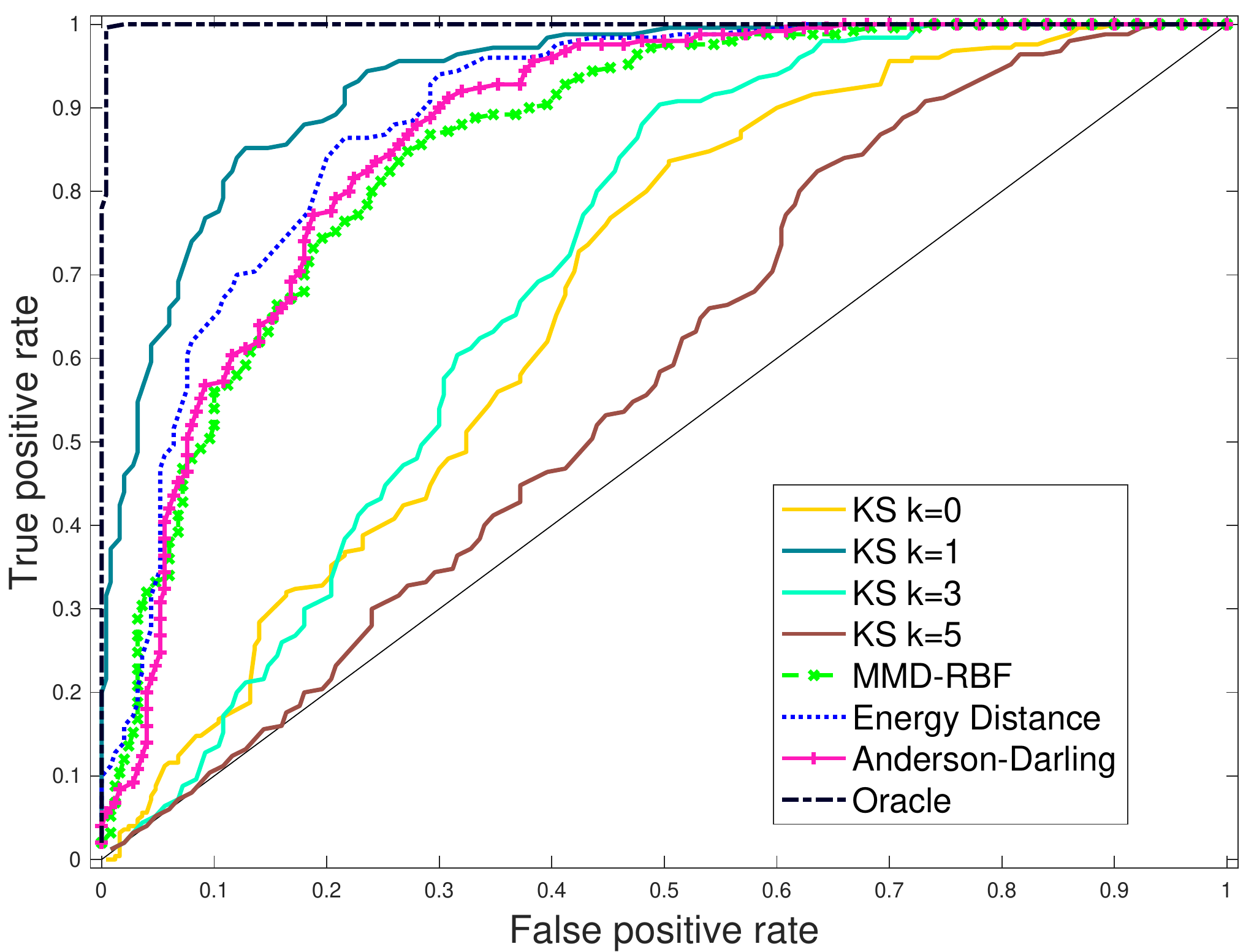}
\caption{\it\small ROC curves for $P=N(0,1)$, $Q=t(3)$.}
\label{fig:gen2} 
\vspace{-4pt}
\end{figure}

\paragraph{Local Density Differences.}  In Figures \ref{fig:loc1} and
\ref{fig:loc2}, we examine the higher-order KS tests and the KS test, in
cases where $P,Q$ have densities $p,q$ such that $p-q$ has sharp local
changes.  Figure \ref{fig:loc1} shows a case where $p-q$ 
is piecewise constant with a few short departures from 0 (see the
appendix for a plot)  and $m=n=500$. The KS test is very powerful, and the 
higher-order KS tests all perform poorly; in fact, the KS test here has better
power than all commonly-used nonparametric tests we tried (results not shown).      
Figure \ref{fig:loc2} displays a case where $p-q$ changes sharply in the  
right tail (see the appendix for a plot) and $m=n=2000$.   The power of the
higher-order KS test appears to increase with $k$, likely because the witness
functions are able to better concentrate on sharp departures for large $k$. 

\begin{figure}[htb]
\centering
\includegraphics[width=0.815\columnwidth]{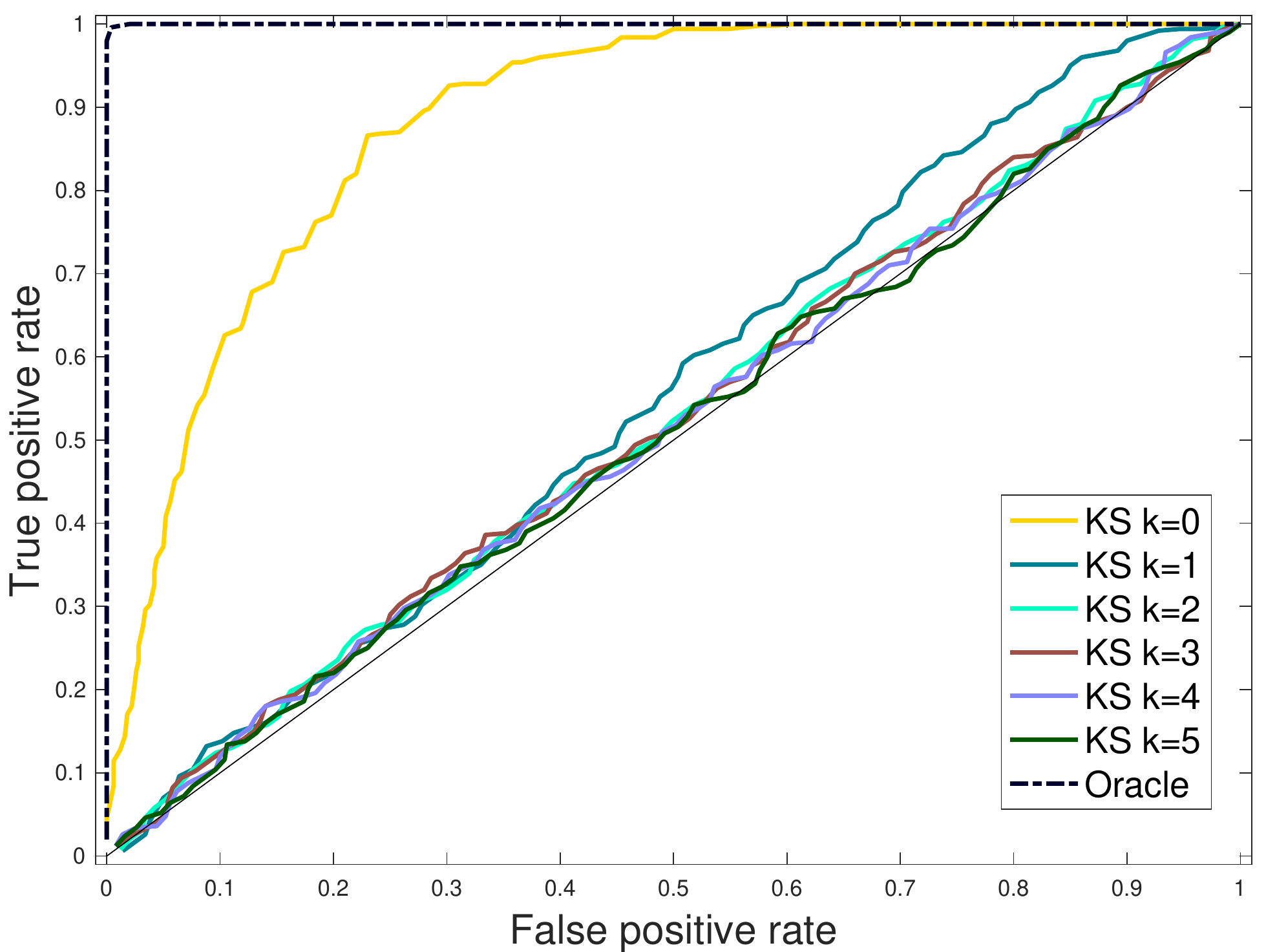}
\caption{\it\small ROC curves for piecewise constant $p-q$.} 
\label{fig:loc1}

\smallskip\smallskip
\includegraphics[width=0.815\columnwidth]{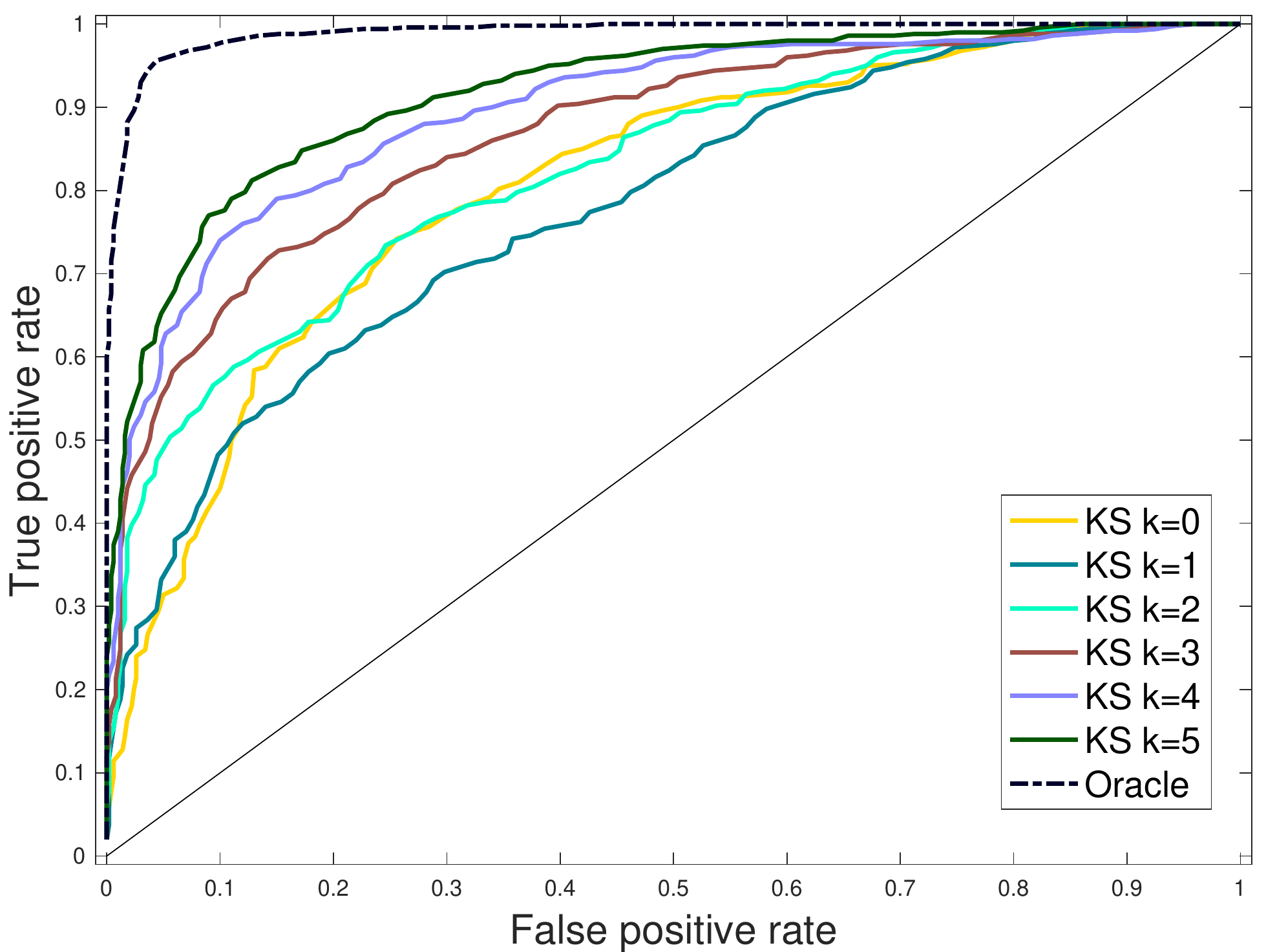} 
\caption{\it\small ROC curves for tail departure in $p-q$.} 
\label{fig:loc2}
\vspace{-4pt}
\end{figure}

\section{DISCUSSION}
\label{sec:discussion}

This paper began by noting the variational characterization of the classical KS
test as an IPM with respect to functions of bounded total variation, and then
proposed a generalization to higher-order total variation classes. This
generalization was nontrivial, with subtleties arising in defining the right
class of functions so that the statistic was finite and amenable for
simplification via a representer result, challenges in computing the statistic
efficiently, and challenges in studying asymptotic convergence and concentration
due to the fact that the function class is not uniformly sup norm bounded.
The resulting class of linear-time higher-order KS tests was shown empirically
to be more sensitive to tail differences than the usual KS test, and to have 
competitive power relative to several other popular tests. 

In future work, we intend to more formally study the power properties of
our new higher-order tests relative to the KS test. The following is a lead in
that direction.  For $k \geq 1$, define $I^k$ to be the $k$th order integral 
operator, acting on a function $f$, via
$$
(I^k f)(x) = \int_0^x \int_0^{t_k} \cdots \int_0^{t_2} f(t_1) \, dt_1 \cdots  
dt_k.
$$
Denote by $F_P,F_Q$ the CDFs of the distributions $P,Q$.  Notice that the
population-level KS test statistic can be written as $\ipm(P,Q;\cF_0) = \|F_P -
F_Q\|_\infty$, where $\|\cdot\|_\infty$ is the sup norm. Interestingly, a
similar representation holds for the higher-order KS tests.

\begin{proposition}
\label{prop:hks_integrated_cdfs}
Assuming $P,Q$ have $k$ moments,
$$
\ipm(P,Q;\cF_k) = \|(I^k)^* (F_P - F_Q) \|_\infty,
$$
where $(I^k)^*$ is the adjoint of the bounded linear operator $I^k$, with
respect to the usual $L_2$ inner product. Further, if $P,Q$ are supported on 
$[0,\infty)$, or their first $k$ moments match, then we have the more explicit 
representation 
\begin{multline*}
\ipm(P,Q;\cF_k) = \\
\sup_{x \in \R} \bigg| \int_x^\infty \int_{t_k}^\infty \cdots \int_{t_2}^\infty
(F_P-F_Q)(t_1) \, dt_1 \cdots dt_k \bigg|.
\end{multline*}
\end{proposition}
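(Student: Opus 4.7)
I would start from the population-level representer in Corollary~\ref{cor:hks_representer_pop}, which gives
\begin{align*}
\ipm(P,Q;\cF_k) = \max\bigl\{ &\sup_{t\geq 0}|(P-Q)g^+_t|,\\
 &\sup_{t\leq 0}|(P-Q)g^-_t|\bigr\},
\end{align*}
and then convert both expressions on the right into iterated tail integrals of $F_P-F_Q$, which I will identify with the adjoint operator $(I^k)^*$ acting on $F_P-F_Q$.

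\paragraph{Key computation.} The identity $(x-t)_+^k/k! = \int_t^\infty (x-s)_+^{k-1}/(k-1)!\,ds$, applied $k$ times, writes $g^+_t$ as a $k$-fold iterated integral of $1\{x>s_k\}$, and the mirror identity writes $g^-_t$ in terms of $1\{x<s_k\}$. Since $P,Q$ have $k$ moments, Fubini lets me swap the $P$- and $Q$-expectations with the nested integrals, yielding for $t\geq 0$
$$
(P-Q)g^+_t = -\int_t^\infty\!\!\int_{s_1}^\infty\!\!\cdots\!\!\int_{s_{k-1}}^\infty (F_P-F_Q)(s_k)\,ds_k\cdots ds_1,
$$
and the analog for $(P-Q)g^-_t$ on $t\leq 0$ with integrations running from $-\infty$. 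A direct $L_2$ pairing computation shows $(I^*g)(t)=\int_t^\infty g$ for $t\geq 0$ and $(I^*g)(t)=-\int_{-\infty}^t g$ for $t\leq 0$; iterating $k$ times reproduces exactly the iterated integrals above (the accumulated signs disappear under the absolute value). Combining these gives $\ipm(P,Q;\cF_k) = \sup_{t\in\R}|(I^k)^*(F_P-F_Q)(t)| = \|(I^k)^*(F_P-F_Q)\|_\infty$.

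\paragraph{Explicit form.} For the second display, let $\tilde H(x) = \int_x^\infty\!\!\cdots\!\!\int_{t_2}^\infty (F_P-F_Q)(t_1)\,dt_1\cdots dt_k$. By construction $\tilde H(x) = (I^k)^*(F_P-F_Q)(x)$ on $x\geq 0$, so the work is to show that the sup of $|\tilde H|$ over $x\leq 0$ coincides with the negative-$t$ half of the representer. Under support on $[0,\infty)$, $F_P-F_Q\equiv 0$ on $(-\infty,0)$, so the $g^-_t$ piece of the representer vanishes and the extension of $\tilde H$ past $0$ reduces to $\tilde H(0)$ plus a polynomial in $x$ whose coefficients are successive moment differences---the hypothesis pins these down. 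Under matching first $k$ moments, repeated integration by parts shows $\int_{-\infty}^\infty G_{j-1}(s)\,ds = 0$ for $j\leq k$, where $G_{j}$ denotes the $j$-th iterated right-tail integral of $F_P-F_Q$; this lets me rewrite $\tilde H(x)$ on $x\leq 0$ as $(-1)^k$ times a left-tail iterated integral, matching the $g^-_t$ representer exactly.

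\paragraph{Main obstacle.} The hardest part is the explicit-form step: the moment assumption is what justifies both the Fubini exchange and the iterated integration by parts that kills the boundary contributions at $-\infty$. Unifying the two halves of $\R$ into a single right-tail sup is only valid because either one-sided support or matching moments removes the polynomial drift that $\tilde H(x)$ would otherwise acquire as $x\to -\infty$. The first identity, by contrast, is essentially bookkeeping once the piecewise form of $(I^*)^k$ is written down.
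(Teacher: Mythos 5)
Your proof of the first identity is correct but takes a genuinely different route. The paper works directly from the $\cF_k$-supremum: it inserts $I^k D^k$ (the identity on $\cF_k$ by the boundary conditions), reparametrizes via $h=f^{(k)}$, and then applies the $k=0$ variational representation to the transformed signed measure, landing on $\|(I^1)^*(I^k)^*(dP-dQ)\|_\infty = \|(I^k)^*(F_P-F_Q)\|_\infty$. You instead start from the population representer of Corollary~\ref{cor:hks_representer_pop}, write $(P-Q)g^\pm_t$ as $k$-fold iterated tail integrals of $F_P-F_Q$ via the telescoping identity $(x-t)_+^k/k!=\int_t^\infty (x-s)_+^{k-1}/(k-1)!\,ds$ and Fubini, and match these against the piecewise formula for $(I^k)^*$. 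Both routes are sound, and the sign bookkeeping you sketch checks out ($(P-Q)g^+_t = -(I^k)^*(F_P-F_Q)(t)$ on $t\geq 0$, $(P-Q)g^-_t=(-1)^k(I^k)^*(F_P-F_Q)(t)$ on $t\leq 0$). Your route makes the piecewise structure of the adjoint more visible, at the cost of leaning on the representer corollary, which the paper's proof avoids for this step.

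The explicit-form step has a real gap in the one-sided-support case. Your matching-moments argument---that repeated integration by parts shows the successive right-tail integrals $G_j$ of $F_P-F_Q$ integrate to zero for $j<k$, so that on $x\leq 0$ the iterated right-tail integral equals $(-1)^k$ times the iterated left-tail integral---is correct and parallels the paper's induction. But when you turn to support on $[0,\infty)$ you write that ``the hypothesis pins these [polynomial coefficients] down,'' and it does not: nonnegative support alone does not make the moment differences vanish. If $P,Q$ live on $[0,\infty)$ with, say, unequal means, then $\int_0^\infty(F_P-F_Q)\neq 0$, and for $k\geq 2$ the iterated right-tail integral $\tilde H(x)$ acquires a degree-$(k-1)$ polynomial drift on $x\leq 0$, so $\sup_{x\in\R}|\tilde H(x)|=\infty$, whereas $\|(I^k)^*(F_P-F_Q)\|_\infty$ stays finite because the piecewise adjoint vanishes on $(-\infty,0)$ when $F_P-F_Q$ does. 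A concrete instance: $P=\delta_1$, $Q=\delta_2$, $k=2$ gives $\ipm(P,Q;\cF_2)=3/2$ but $\tilde H(x)=3/2-x$ for $x\leq 0$. The paper itself dismisses this case with ``clear from the definition of the adjoint,'' which is also too fast; the displayed formula with $\sup_{x\in\R}$ appears to require either matching moments or implicitly restricting the sup to $x\geq 0$ when only the support condition holds. So this gap is shared with the paper rather than introduced by you, but your write-up inherits it rather than closing it---you would need either to restrict the sup to $x\geq 0$ in the support-only case, or to add the moment-matching hypothesis there too.
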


The representation in Proposition \ref{prop:hks_integrated_cdfs} could provide
one avenue for power analysis.  When $P,Q$ are supported on $[0,\infty)$, or
have $k$ matching moments, the representation is particularly simple in
form. This form confirms the intuition that detecting higher-order moment
differences is hard: as $k$ increases, the $k$-times integrated CDF difference 
$F_P - F_Q$ becomes smoother, and hence the differences are less accentuated.    

In future work, we also intend to further examine the asymptotic null of the
higher-order KS test (the Gaussian process from Theorem \ref{thm:null_hks}), and
determine to what extent it depends on the underlying distribution $P$ (beyond
say, its first $k$ moments). Lastly, some ideas in this paper seem
extendable to the multivariate and graph settings, another direction for future
work.   

\paragraph{Acknowledgments.} We thank Alex Smola for several early
inspiring discussions. VS and RT were supported by NSF Grant DMS-1554123.       

\newpage
\onecolumn
\appendix
\section{Appendix}

\subsection{Comparing the Test in \citet{wang2014falling}}

The test statistic in \citet{wang2014falling} can be expressed as
\begin{equation}
\label{eq:hks_wang2014}
T^{**} = \max_{t \in Z_{(N)}} \; |(\P_m - \Q_n) g^+_t| = 
\max_{t \in Z_{(N)}} \; \bigg| \frac{1}{m} \sum_{i=1}^m (x_i - t)_+^k - 
\frac{1}{n} \sum_{i=1}^n (y_i - t)_+^k \bigg|.
\end{equation}
This is very close to our approximate statistic $T^*$ in
\eqref{eq:hks_approx_simple}.  The only difference is that we replace   
\smash{$g^+_t(x)=(x-t)_+^k$} by \smash{$g^-_t(x) = (t-x)_+^k$} for $t \leq 0$.  

Our exact (not approximate) statistic is in \eqref{eq:hks_representer}.  This has
the advantage having an equivalent variational form \eqref{eq:hks}, and the
latter form is important because it shows the statistic to be a metric.

\subsection{Proof of Proposition \ref{prop:metric}}

We first claim that $F(x) = |x|^k/k!$ is an
envelope function for $\cF_k$, meaning $f \leq F$ for all $f \in \cF_k$.  To see
this, note each $f \in \cF_k$ has $k$th weak derivative with left or right limit 
of 0 at 0, so $|f^{(k)}(x)| \leq \TV(f^{(k)}) \leq 1$; repeatedly integrating
and applying the derivative constraints yields the claim.   Now due to the
envelope function, if $P,Q$ have $k$ moments, then the IPM is well-defined:
$|\P f| < \infty$, $|\Q f| < \infty$ for all $f \in \cF_k$.  Thus if $P=Q$, then 
clearly $\ipm(P,Q;\cF_k)=0$.

For the other direction, suppose that $\ipm(P,Q;\cF_k)=0$.  By simple rescaling,
for any $f$, if \smash{$\TV(f^{(k)}) = R > 0$}, then \smash{$\TV((f/R)^{(k)})
  \leq 1$}. Therefore $\ipm(P,Q;\cF_k)=0$ implies
\smash{$\ipm(P,Q;\widetilde\cF_k)=0$}, where 
$$
\widetilde\cF_k = \big\{ f : \TV(f^{(k)}) < \infty, \; f^{(j)}(0) = 0, \; j \in
\{0\} \cup [k-1], \; \text{and} \; f^{(k)}(0+)=0 \; \text{or} \; f^{(k)}(0-)=0
\big\}.  
$$
This also implies \smash{$\ipm(P,Q;\widetilde\cF^+_k)=0$}, where 
$$
\widetilde\cF^+_k = \{ f : \TV(f^{(k)}) < \infty, \; f(x) = 0 \; \text{for $x
  \leq 0$} \}.
$$
As the class \smash{$\widetilde\cF^+_k$} contains \smash{$C^\infty_c(\R_+)$},
where $\R_+=\{x : x > 0\}$ (and \smash{$C^\infty_c(\R_+)$} is the class of 
infinitely differentiable, compactly supported functions on $\R_+$), we have
by Lemma \ref{lem:dejan} that $P(A \cap \R_+)=Q(A \cap \R_+)$ for all open sets 
$A$.  By similar arguments, we also get that $P(A \cap \R_-)=Q(A \cap \R_-)$,
for all open sets $A$, where $\R_-=\{x : x < 0\}$. This implies that
$P(\{0\}) = Q(\{0\})$ (as $1-P(\R_+)-P(\R_-)$, and the same for $Q$), and
finally, $P(A) = Q(A)$ for all open sets $A$, which means that $P=Q$.

\subsection{Statement and Proof of Lemma \ref{lem:dejan}}

\begin{lemma}
\label{lem:dejan}
For any two distributions $P,Q$ supported on an open set $\Omega$, if 
\smash{$\E_{X \sim P}[f(X)] = \E_{Y \sim Q}[f(Y)]$} for all $f\in
C^\infty_c(\Omega)$, then $P=Q$.  
\end{lemma}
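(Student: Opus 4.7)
The plan is to go from equality of expectations of smooth compactly supported functions to equality of the measures on all Borel sets, using a two-step argument: first extend agreement to indicator functions of open subsets of $\Omega$, then invoke the $\pi$-$\lambda$ theorem since the open sets form a $\pi$-system generating the Borel $\sigma$-algebra on $\Omega$.

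The main technical step is approximation. I would fix an exhaustion of $\Omega$ by an increasing sequence of open sets $U_j$ with $\overline{U_j}$ compact and $\overline{U_j} \subset \Omega$; such an exhaustion exists because $\Omega$ is an open subset of $\R$. For any open $A \subset \Omega$, set $A_j = A \cap U_j$, which has compact closure contained in $\Omega$. By the standard mollifier construction (convolve the characteristic function of a slightly shrunken version of $A_j$ with a smooth bump $\eta_\varepsilon$ of width $\varepsilon$ supported in a small ball), I can build a sequence $f_{j,n} \in C_c^\infty(\Omega)$ with $0 \leq f_{j,n} \leq 1$ and $f_{j,n}(x) \uparrow \mathds{1}_{A_j}(x)$ pointwise as $n \to \infty$. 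By hypothesis $\E_P f_{j,n} = \E_Q f_{j,n}$ for every $n$, so by monotone convergence $P(A_j) = Q(A_j)$. Letting $j \to \infty$ and applying monotone convergence again yields $P(A) = Q(A)$ for every open $A \subset \Omega$.

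Once equality holds on all open subsets of $\Omega$, I would conclude as follows. The collection of open subsets of $\Omega$ is closed under finite intersections, so it is a $\pi$-system, and it generates the Borel $\sigma$-algebra on $\Omega$. Since $P$ and $Q$ are probability measures on $\Omega$ that agree on this $\pi$-system, Dynkin's $\pi$-$\lambda$ theorem implies $P = Q$ on all Borel sets.

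The main obstacle is the mollification step near the boundary of $\Omega$: one must ensure that the smoothed approximants remain supported in $\Omega$ (not just in $\R$) and remain compactly supported. This is why I work with the compactly contained subset $A_j \subset U_j \Subset \Omega$ and choose the mollifier width $\varepsilon$ smaller than the distance from $\overline{A_j}$ to $\partial \Omega$. Everything else is routine monotone convergence and a standard application of Dynkin's theorem. Note that this lemma is used in the proof of Proposition~\ref{prop:metric} with $\Omega = \R_+$ and $\Omega = \R_-$, both of which are open in $\R$, so the lemma applies in exactly the form needed there.
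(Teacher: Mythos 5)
Your argument is correct and follows essentially the same route as the paper: both reduce matters to showing $P(A)=Q(A)$ for every open $A\subseteq\Omega$ by approximating $\mathds{1}_A$ with smooth compactly supported bump functions and passing to the limit. The only cosmetic differences are in the bookkeeping: the paper picks compacts $K_n\subseteq A$ via inner regularity of the Radon measures and uses the sandwich $P(K_n)\leq \E_P f_n=\E_Q f_n\leq Q(A)$, whereas you use a topological exhaustion $U_j\Subset\Omega$ together with monotone convergence and Dynkin's $\pi$-$\lambda$ theorem; both are standard and equally valid. One small imprecision: mollifying a ``slightly shrunken'' $A_j$ with shrinking bandwidth does not automatically produce a \emph{monotone} sequence $f_{j,n}\uparrow\mathds{1}_{A_j}$; either build the bumps between a nested exhaustion of $A_j$ by compacts to enforce monotonicity, or simply replace monotone convergence by dominated convergence (the constant envelope $1$ is integrable since $P,Q$ are probability measures), which requires no monotonicity at all.
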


\begin{proof}
It suffices to show that $P(A) = Q(A)$ for every open set $A\subseteq \Omega$. 
As $P,Q$ are probability measures and hence Radon measures, there exists a
sequence of compact sets $K_n \subseteq A$, $n=1,2,3,\ldots$ such that 
$\lim_{n \to \infty} P(K_n) = P(A)$ and $\lim_{n \to \infty} Q(K_n) = Q(A)$.
Let $f_n$, $n=1,2,3,\ldots$ be smooth compactly supported functions with values
in $[0,1]$ such that $f_n = 1$ on $K_n$ and $f_n = 0$ outside of $A$.
(Such functions can be obtained by applying Urysohn's Lemma on appropriate sets 
containing $K_n$ and $A$  and convolving the resulting continuous function with
a bump function.) 
Then $P(K_n) \leq E_P(f_n) = E_Q(f_n) \leq Q(A)$ (where the equality by the main 
assumption in the lemma). Taking $n \to \infty$ gives $P(A) \leq Q(A)$. 
By reversing the roles of $P,Q$, we also get $Q(A) \leq P(A)$. Thus $P(A) =
Q(A)$.  
\end{proof}

\subsection{Proof of Theorem \ref{thm:hks_representer}}

Let $\cG_k$ be as in \eqref{eq:gk}.  Noting that $G_k \subseteq \cF_k$, it is
sufficient to show  
$$
\sup_{f \in \cF_k} \; |\P_m f - \Q_n f| \leq \sup_{g \in \cG_k} \; |\P_m g - \Q_n g|.
$$
Fix any $f \in \cF_k$.  Denote \smash{$Z^0_{(N)}= \{0\} \cup Z_{(N)}$}.
From the statement and proof of Theorem 1 in \citet {mammen1991nonparametric},
there exists a spline $\tf$ of degree $k$, with finite number of knots such that
for all \smash{$z \in Z^0_{(N)}$} 
\begin{align*}
f(z) &= \tf(z), \\
f^{(j)}(z) &= \tf^{(j)}(z), \;\; j \in [k-1], \\
f^{(k)}(z^+) &= \tf^{(k)}(z^+), \\
f^{(k)}(z^-) &= \tf^{(k)}(z^-).
\end{align*}
and importantly, $\TV(\tf^{(k)}) \leq \TV(f^{(k)})$.  As \smash{$0 \in
  Z^0_{(N)}$}, we hence know that the boundary constraints (derivative
conditions at 0) are met, and \smash{$\tf \in \cF_k$}.

Because \smash{$\tf$} is a spline with a given finite number of knot points, we
know that it has an expansion in terms of truncated power functions.  Write
$t_0,t_1,\ldots,t_L$ for the knots of \smash{$\tf$}, where $t_0=0$.  Also
denote $g_t = g^+_t$ when $t > 0$, and $g_t = g^-_t$ when $t < 0$.  Then for
some $\alpha_\ell \in \R$, $\ell \in \{0\} \cup [L]$, and a polynomial $p$ of
degree $k$, we have 
$$
\tf = p + \alpha_0 g_0^+ + \sum_{\ell=1}^L \alpha_\ell g_{t_\ell},
$$
The boundary conditions on \smash{$\tf$}, \smash{$g^+_0$}, \smash{$g_{t_\ell}$},
$\ell \in [L]$ imply   
\begin{gather*}
p(0) = p^{(1)}(0) = \ldots = p^{(k-1)}(0) = 0,\\
(\alpha_0 g_{0^+} + p )^{(k)}(0^+) = 0 \;\; \text{or} \;\;
(\alpha_0 g_{0^+} + p )^{(k)}(0^-) = 0.
\end{gather*}
The second line above implies that 
$$
\alpha_0 + p^{(k)} = 0 \;\; \text{or} \;\; p^{(k)} = 0.
$$
In the second case, we have $p=0$. In the first case, we have 
$p(x) = -\alpha_0 x^k / k!$, so \smash{$\alpha_0 g_0 + p = -(-1)^{k+1}  
\alpha_0 g^-_0$}.  Therefore, in all cases we can write
$$
\tf = \sum_{\ell=0}^L \alpha_\ell g_{t_\ell},
$$
with the new understanding that $g_0$ is either \smash{$g^+_0$} or
\smash{$g^-_0$}.  This means that \smash{$\tf$} lies in the span of functions in 
$\cG_k$.  Furthermore, our last expression for \smash{$\tf$} implies 
$$
\|\alpha\|_1 = \sum_{\ell=0}^L |\alpha_\ell| = \TV(\tf^{(k)}) \leq \TV(f^{(k)})
\leq 1. 
$$
Finally, using the fact that $f$ and \smash{$\tf$} agree on \smash{$Z^0_{(N)}$}, 
\begin{align*}
|\P_m f - \Q_n f| &= |\P_m \tf - \Q_n \tf| \\
&= \bigg|\sum_{\ell=0}^L \alpha_\ell (\P_m g_{t_\ell} - \Q_n g_{t_\ell}) \bigg|
  \\ 
&\leq \sum_{\ell=0}^L |\alpha_\ell| \cdot \sup_{g \in \cG_k} |\P_m g - \Q_n g|
  \\  
&\leq \sup_{g \in \cG_k} \; |\P_m g - \Q_n g|,
\end{align*}
the last two lines following from Holder's inequality, and $\|\alpha\|_1
\leq 1$.  This completes the proof.

\subsection{Proof of Proposition \ref{prop:hks_approx_k6}} 

From \citet{shor1998nondifferentiable,nesterov2000squared}, a polynomial of
degree $2d$ is nonnegative on $\R$ if and only if it can be written as a sum of
squares (SOS) of polynomials, each of degree $d$.  Crucially, one can show that 
\smash{$p(x) = \sum_{i=0}^{2d} a_i x^i$} is SOS if and only if there is a 
positive semidefinite matrix $Q \in \R^{(d+1) \times (d+1)}$ such that 
$$
a_{i-1} = \sum_{j+k=i} Q_{jk}, \;\; i \in [2d].
$$
Finding such a matrix $Q$ can be cast as a semidefinite program (SDP) (a
feasibility program, to be precise), and therefore checking nonnegativity can be 
done by solving an SDP. 

Furthermore, calculating the maximum of a polynomial $p$ is equivalent to
calculating the smallest $\gamma$ such that $\gamma - p$ is nonnegative.  This
is therefore also an SDP.

Finally, a polynomial of degree $k$ is nonnegative an interval $[a,b]$ if and
only if it can be written as 
\begin{equation}
p(x) = \begin{cases}
s(x) + (x-a)(b-x) t(x)  \quad \text{$k$ even}\\
(x-a) s(x) + (b-x) t(x) \quad \text{$k$ odd}
\end{cases},
\end{equation}
where $s,t$ are polynomials that are both SOS.  Thus maximizing a
polynomial over an interval is again equivalent to an SDP.  For details,
including a statement that such an SDP can be solved to $\epsilon$-suboptimality
in $c_k \log(1/\epsilon)$ iterations, where $c_k>0$ is a constant that depends
on $k$, see \citet{nesterov2000squared}.

\subsection{Proof of Lemma \ref{lem:hks_approx_simple}}

Suppose $t^*$ maximizes the criterion in \eqref{eq:hks_representer}.  If
$t^*=0$, then $T^*=T$ and the result trivially holds.  Assume without a loss of
generality that $t^*>0$, as the result for $t^*<0$ will follow similarly.  

If $t^*$ is one of the sample points \smash{$Z_{(N)}$}, then $T^*=T$ and the  
result trivially holds; if $t^*$ is larger than all points in \smash{$Z_{(N)}$},
then $T^*=T=0$ and again the result trivially holds.  Hence we can assume
without a loss of generality that $t^* \in (a,b)$, where \smash{$a,b \in
  Z^0_{(N)}$}. Define 
$$
\phi(t) = \frac{1}{k!} \sum_{i=1}^N c_i (z_i-t)_+^k, \;\; t \in [a,b],
$$
where $c_i=(\one_m/m-\one_n/n)_i$, $i \in [N]$, as before. Note that
$T=\phi(t^*)$, and  
$$
|\phi'(t) | \leq \frac{1}{(k-1)!} \sum_{i=1}^N |c_i | |z_i^{k-1}|
=  \frac{1}{(k-1)!} \bigg( \frac{1}{m}\sum_{i=1}^m |x_i|^{k-1}
+\frac{1}{n}\sum_{i=1}^n |y_i|^{k-1} \bigg) := L.
$$
Therefore
$$
T-T^* \leq |f(t^*)| -  |f(a)| \leq | f(t^*) - f(a) |\leq |t^*-a| L \leq
\delta_N L, 
$$
as desired. 

\subsection{Proof of Lemma \ref{lem:gk_bracket}}

Decompose \smash{$\cG_k=\cG^+_k \cup \cG^-_k$}, where \smash{$\cG^+_k=\{ g^+_t :
  t \geq 0 \}$}, \smash{$\cG^-_k=\{ g^-_t : t \leq 0 \}$}.  We will bound the 
bracketing number of \smash{$\cG^+_k$}, and the result for \smash{$\cG^-_k$},
and hence $\cG_k$, follows similarly.

Our brackets for \smash{$\cG_k^+$} will be of the form \smash{$[g_{t_i},
  g_{t_{i+1}}]$}, $i \in \{0\} \cup [R]$, where $0=t_1 < t_2 < \cdots < t_{R+1}
= \infty$ are to be specified, with the convention that $g_\infty = 0$.  
It is clear that such a set of brackets covers \smash{$\cG_k^+$}.
Given $\epsilon > 0$, we need to choose the brackets such that 
\begin{equation}
\label{eq:eps_bd}
\| g_{t_i} - g_{t_{i+1}} \|_2 \leq \epsilon, \;\; i\in \{0\} \cup [R], 
\end{equation}
and then show that the number of brackets $R$ is small enough to satisfy the bound
in the statement of the lemma. 

For any $0 \leq s < t$, 
\begin{align*}
k!^2 \|g_s - g_t\|_2^2
&= \int_s^t (x-s)_+^{2k} \,dP(x) + 
  \int_t^\infty  \big( (x-s)^k - (x-t)^k \big)^2 \, dP(x) \\
&\leq \int_s^\infty  \big( k (x-s)^{k-1} (t-s) \big)^2 \, dP(x) \\ 
&= k^2 (t-s)^2 \int_s^\infty  (x-s)^{2k-2} \, dP(x),
\end{align*}
where the second line follows from elementary algebra. 
Now in view of the moment bound assumption, we can bound the integral above
using Holder's inequality with $p=(2k+\delta)/(2k-2)$ and
$q=(2k+\delta)/(2+\delta)$ to get  
\begin{align}
\nonumber
k!^2 \|g_s - g_t \|_2^2
&\leq k^2 (t-s)^2 \bigg( \int_s^\infty (x-s)^{2k+\delta} \,dP(x) \bigg)^{1/p}
  \bigg( \int_s^\infty 1^q \,(x) dP\bigg)^{1/q} \\
\label{eq:gs_gt}
&\leq \frac{M^{1/p}}{(k-1)!^2} (t-s)^2,
\end{align}
where recall the notation \smash{$M=\E[|X|^{2k+\delta}] < \infty$}. 

Also, for any $t>0$, using Holder's inequality again, we have
\begin{align}
\nonumber
k!^2 \|g_t - 0\|_2^2
&= \int_t^\infty (x-t)^{2k} \, dP(x) \\
\nonumber
&\leq \bigg(\int_t^\infty (x-t)^{2k+\delta} \, dP(x) \bigg)^{{2k/(2k+\delta)}} 
  \big(P(X\geq t) \big)^{\delta/(2k+\delta)} \\
\label{eq:gt_0}
&\leq M^{2k/(2k+\delta)} \bigg( \frac{\E |X|^{2k+\delta)}}{t^{2k+\delta}}
  \bigg)^{\delta/(2k+\delta)} 
=\frac{M}{t^\delta},
\end{align}
where in the third line we used Markov's inequality.

Fix an $\epsilon > 0$.  For parameters $\beta,R > 0$ to be determined, set $t_i
= (i-1) \beta$ for $i \in [R]$ and $t_0=0$, $t_{R+1} = \infty$.  Looking at
\eqref{eq:gs_gt}, to meet \eqref{eq:eps_bd}, we see we can choose $\beta$ such
that   
$$
\frac{M^{1/p}}{(k-1)!^2} \beta^2 \leq \epsilon^2.
$$
Then for such a $\beta$, looking at \eqref{eq:gt_0}, we see we can choose $R$
such that 
$$
\frac{M}{ k!^2 ((R-1)\beta)^\delta } \leq \epsilon^2.
$$
In other words, we can choose choose
$$
\beta = \frac{(k-1)!}{M^{1/2p}}, \;\;
R= 1 + \left\lceil \frac{M^{1/2p+1/\delta}}{(k-1)! k!^{2/\delta}
    \epsilon^{2/\delta+1}} \right\rceil, 
$$
and \eqref{eq:gs_gt}, \eqref{eq:gt_0} imply that we have met \eqref{eq:eps_bd}.  
Therefore, 
$$
 \log N_{[]}(\epsilon, \|\cdot\|, \cG_k^+) \leq \log R \leq
C \log \frac{M^{1+ \frac{\delta(k-1)}{2k+\delta}}}{\epsilon^{2+\delta}},
$$
where $C>0$ depends only on $k,\delta$.

\subsection{Proof of Theorem \ref{thm:null_hks}}

Once we have a finite bracketing integral for $\cG_k$, we can simply apply
Theorem \ref{thm:null_gen} to get the result.  Lemma \ref{lem:gk_bracket}
shows the log bracketing number of $\cG_k$ to grow at the rate
$\log(1/\epsilon)$, slow enough to imply a finite bracketing integral
(the bracketing integral will be finite as long as the log bracketing number
does not grow faster than $1/\epsilon^2$). 

\subsection{Proof of Corollaries \ref{cor:null_hks_approx_k6} and 
  \ref{cor:null_hks_approx_simple}} 

For the approximation from Proposition \ref{prop:hks_approx_k6}, observe 
$$
\sqrt{N} T_\epsilon = \sqrt{N} T + \sqrt{N} (T - T_\epsilon),
$$
and \smash{$0 \leq \sqrt{N} (T - T_\epsilon) \leq \sqrt{N} \epsilon$}, so for
\smash{$\epsilon=o(1/\sqrt{N})$}, we will have \smash{$\sqrt{N} T_\epsilon$}
converging weakly to the same Gaussian process as \smash{$\sqrt{N} T$}. 

For the approximation in \eqref{eq:hks_approx_simple}, the argument is similar,
and we are simply invoking Lemma 5 in \citet{wang2014falling} to bound the
maximum gap $\delta_N$ in probability, under the density conditions.

\subsection{Proof of Theorem \ref{thm:concentration_hks}}

Let \smash{$W=\sqrt{m} \ipm(P_m,P; \cG_k)$}.  The bracketing integral of $\cG_k$
is finite due to the slow growth of the log bracketing number from Lemma
\ref{lem:gk_bracket}, at the rate $\log(1/\epsilon)$.  Also, we can clearly take 
$F(x)=|x|^k/k!$ as an envelope function for $\cG_k$.  Thus, we can apply Theorem 
\ref{thm:concentration_hks} to yield 
$$
\big(\E[\ipm(P_m,P; \cG_k)^p]\big)^{1/p} \leq \frac{C}{\sqrt{m}} 
$$
for a constant $C>0$ depending only on $k,p$, and $\E|X|^p$.  Combining this
with Markov's inequality, for any $a$,
$$
\P\big(\ipm(P_m,P;\cG_k) > a \big) \leq \bigg(\frac{C}{\sqrt{m} a}\bigg)^p,
$$
thus for \smash{$a=C/(\sqrt{m}\alpha^{1/p})$}, we have $\ipm(P_m,P; \cG_k) \leq
a$ with probability at least $1-\alpha$.  The same argument applies to 
\smash{$W=\sqrt{n} \ipm(Q_n,P; \cG_k)$}, and putting these together yields the
result. The result when we additionally assume finite Orlicz norms is also
similar.  

\subsection{Proof of Corollary \ref{cor:hks_representer_pop}}

Let $f$ maximize $|(\P-\Q)f|$.  Due to the moment conditions (see the proof of
Proposition \ref{prop:metric}), we have $|\P f| < \infty$, $|\Q f| < \infty$.
Assume without loss of generality that $(\P-\Q)f > 0$.  By the strong law of
large numbers, we have $(\P_m-\Q_n)f \to (\P-\Q)f$ as $m,n \to \infty$, almost
surely. Also by the strong law, $\P_m |x|^{k-1} \to \P |x|^{k-1}$ as $m \to
\infty$, almost surely, and $\Q_n |y|^{k-1} \to \Q |y|^{k-1}$ as $n \to \infty$,
almost surely. For what follows, fix any samples \smash{$X_{(m)},Y_{(n)}$}
(i.e., take them to be nonrandom) such that the aforementioned convergences
hold.   

For each $m,n$, we know by the representer result in Theorem
\ref{thm:hks_representer} that there exists $g_{mn} \in \cG_k$ such that 
$(\P_m-\Q_n)f = |(\P_m-\Q_n)g_{mn}|$.  (This is possible since the proof of
Theorem \ref{thm:hks_representer} does not rely on any randomness that is
inherent to \smash{$X_{(m)},Y_{(n)}$}, and indeed it holds for any fixed sets of
samples.)   Assume again without a loss of generality that
$(\P_m-\Q_n)g_{mn}>0$. Denote by $t_{mn}$ the knot of $g_{mn}$ (i.e.,
\smash{$g_{mn} = g^+_{t_{mn}}$} if $t \geq 0$, and \smash{$g_{mn} =
  g^-_{t_{mn}}$} if $t \leq 0$).  We now consider two cases.    

If $|t_{mn}|$ is a bounded sequence, then by the Bolzano-Weierstrass theorem, it
has a convergent subsequence, which converges say to $t \geq 0$. Passing to this
subsequence (but keeping the notation unchanged, to avoid unnecessary clutter)
we claim that $(\P_m-\Q_n) g_{mn} \to (\P-\Q) g$ as $m,n \to \infty$, where
\smash{$g=g^+_t$}. To see this, assume $t_{mn} \geq t$ without a loss of
generality (the arguments for $t_{mn} \leq t$ are similar), and note 
$$
g(x) - g_{mn}(x) = 
\begin{cases}
0 & x < t \\
(x-t)^k & t \leq x < t_{mn} \\
(t_{mn}-t) \sum_{i=0}^{k-1} (x-t)^i (x-t_{mn})^{k-1-i} & x \geq t_{mn} 
\end{cases},
$$
where we have used the identity \smash{$a^k-b^k = (a-b) \sum_{i=0}^{k-1} a^i
  b^{k-1-i}$}.  Therefore, as $m,n \to \infty$,
$$
|\P_m(g_{mn} - g)| \leq k |t_{mn}-t| \P_m |x|^{k-1} \to 0,
$$ 
because $t_{mn} \to t$ by definition, and $\P_m |x|^{k-1} \to \P|x|^k$.
Similarly, as $m,n \to \infty$, we have $|\Q_n(g_{mn} - g)| \to 0$, and
therefore $|(\P_m-\Q_n) (g_{mn} - g)| \leq |\P_m (g_{mn} - g)| + |\Q_m (g_{mn} -
g)| \to 0$, which proves the claim.  But since $(\P_m-\Q_n) g_{mn} =
(\P_m-\Q_n)f$ for each $m,n$, we must have $(\P-\Q)g = (\P-\Q)f$, i.e., there is
a representer in $\cG_k$, as desired. 

If $|t_{mn}|$ is unbounded, then pass to a subsequence in which $t_{mn}$
converges say to $\infty$ (the case for convergence to $-\infty$ is similar). In
this case, we have $(\P_m-Q_n)g_{mn} \to 0$ as $m,n \to \infty$, and since
$(\P_m-\Q_n) g_{mn} = (\P_m-\Q_n)f$ for each $m,n$, we have $(\P-\Q)f=0$.  But
we can achieve this with \smash{$(\P-\Q) g^+_t$}, by taking $\to \infty$, so
again we have a representer in $\cG_k$, as desired.

\subsection{Proof of Corollary \ref{cor:powerful}}

When we reject as specified in the corollary, note that for $P=Q$, we have type
I error at most $\alpha_N$ by Theorem \ref{thm:concentration_gen}, and as
$\alpha_N = o(1)$, we have type I error converging to 0.

For $P \not= Q$, such that the moment conditions are met, we know by Corollary  
\ref{cor:hks_representer_pop} that $\ipm(P,Q;\cG_k) \not= 0$.  Recalling 
$1/\alpha_N = o(N^{p/2})$, we have as $N \to \infty$, 
$$
c(\alpha_N) \bigg(\frac{1}{\sqrt{m}} + \frac{1}{\sqrt{n}} \bigg) 
= \alpha^{-1/p} \bigg(\frac{1}{\sqrt{m}} + \frac{1}{\sqrt{n}}\bigg) \to 0. 
$$
The concentration result from Theorem \ref{thm:concentration_hks} shows that $T$
will concentrate around $\ipm(P,Q;\cG_k) \not= 0$ with probability tending to 1,
and thus we reject with probability tending to 1.

\subsection{Additional Experiments}

\subsection{Local Density Differences Continued}

Figure \ref{fig:more} plots the densities used for the local density
difference experiments, with the left panel corresponding to Figure
\ref{fig:loc1}, and the right panel to Figure \ref{fig:loc2}.

\begin{figure}[htb]
\centering
\includegraphics[width=0.485\textwidth]{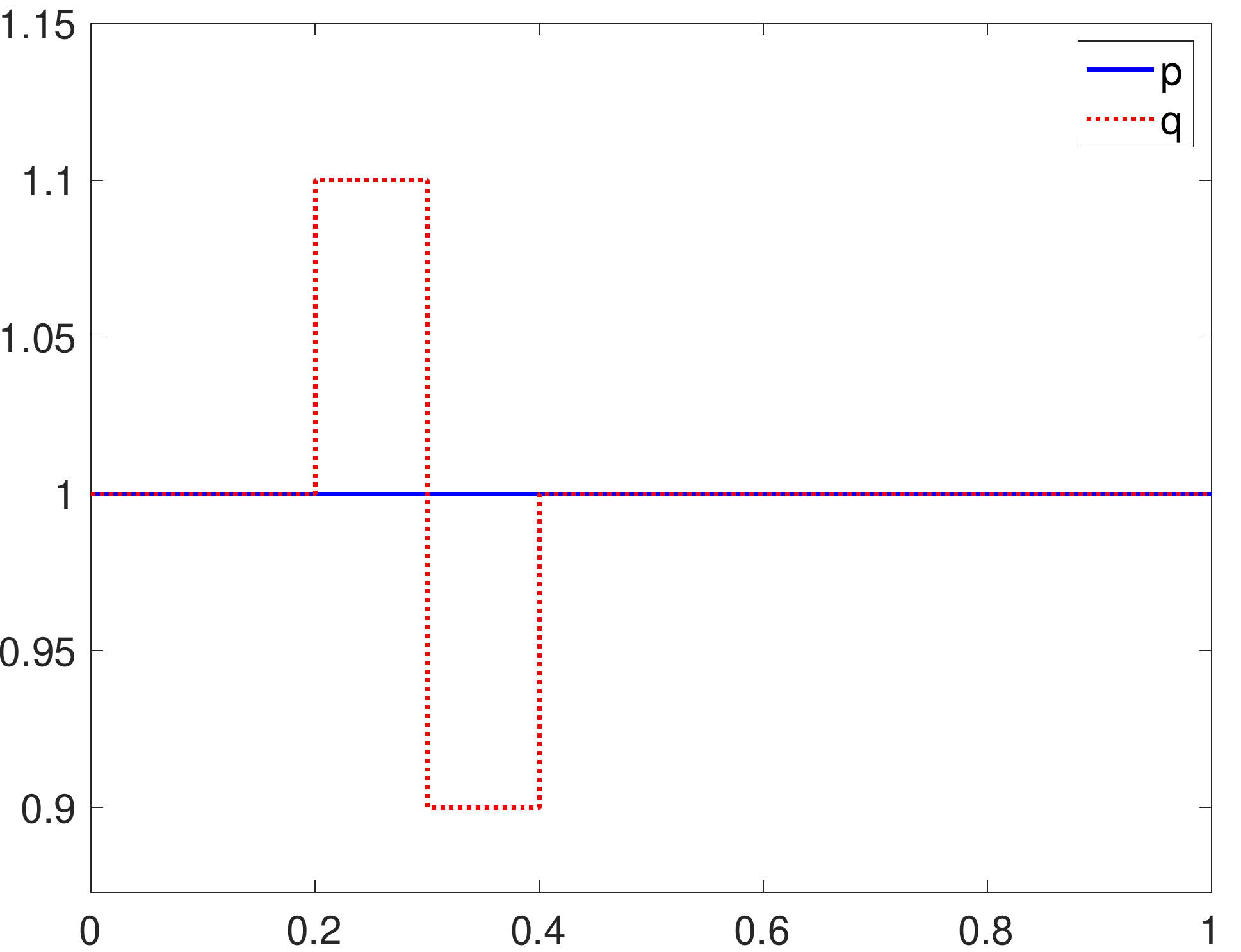}
\includegraphics[width=0.485\textwidth]{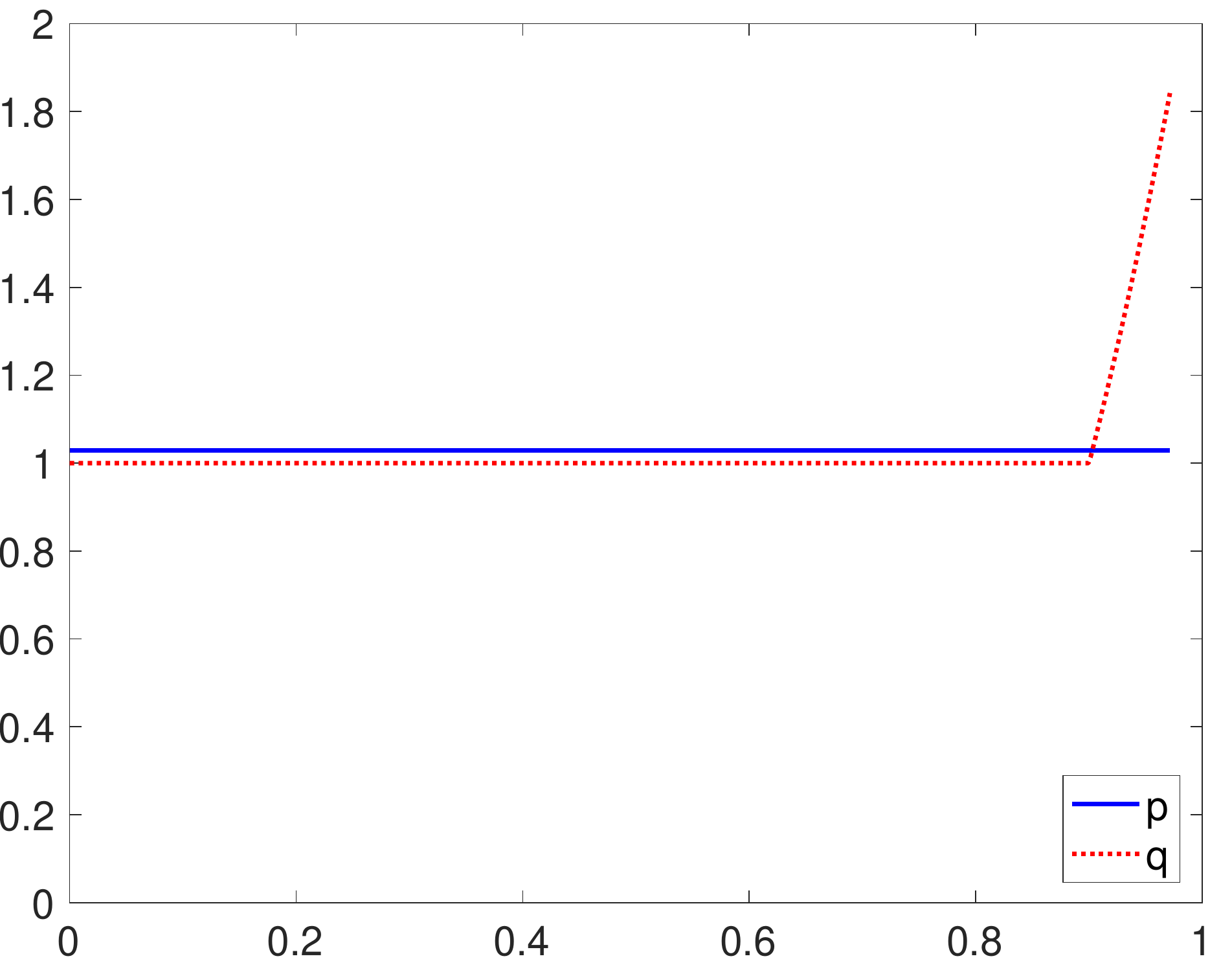}
\caption{\it\small Densities for the local density difference experiments.}
\label{fig:more}
\end{figure}

\subsection{Comparison to MMD with Polynomial Kernel}

Now we compare the higher-order KS test to the MMD test with a polynomial
kernel, as suggested by a referee of this paper.  The MMD test with a polynomial
kernel looks at moment differences up to some prespecified order $d \geq 1$, and
its test statistic can be written as  
$$
\sum_{i=0}^d {d \choose i} (\P_n x^i - \P_m y^i)^2.
$$
This looks at a weighted sum of {\it all} moments up to order $d$, whereas our
higher-order KS test looks at truncated moments of a {\it single} order $k$.  
Therefore, to put the methods on more equal footing, we aggregated the 
higher-order KS test statistics up to order $k$, i.e., writing $T_i$ to denote
the $i$th order KS test statistic, $i \in [k]$, we considered
$$
\sum_{i=0}^k {k \choose i} T_i^2,
$$
borrowing the choice of weights from the MMD polynomial kernel test statistic. 

Figure \ref{fig:poly} shows ROC curves from two experiments comparing the
higher-order KS test and MMD polynomial kernel tests.  We used distributions
$P=N(0,1)$, $Q=N(0.2,1)$ in the left panel (as in Figure \ref{fig:gen1}), and
$P=N(0,1)$, $Q=t(3)$ in the right panel (as in Figure \ref{fig:gen2}). We can see
that the (aggregated)  higher-order KS tests and MMD polynomial kernel tests
perform roughly similarly.   

\begin{figure}[htb]
\centering
\includegraphics[width=0.485\textwidth]{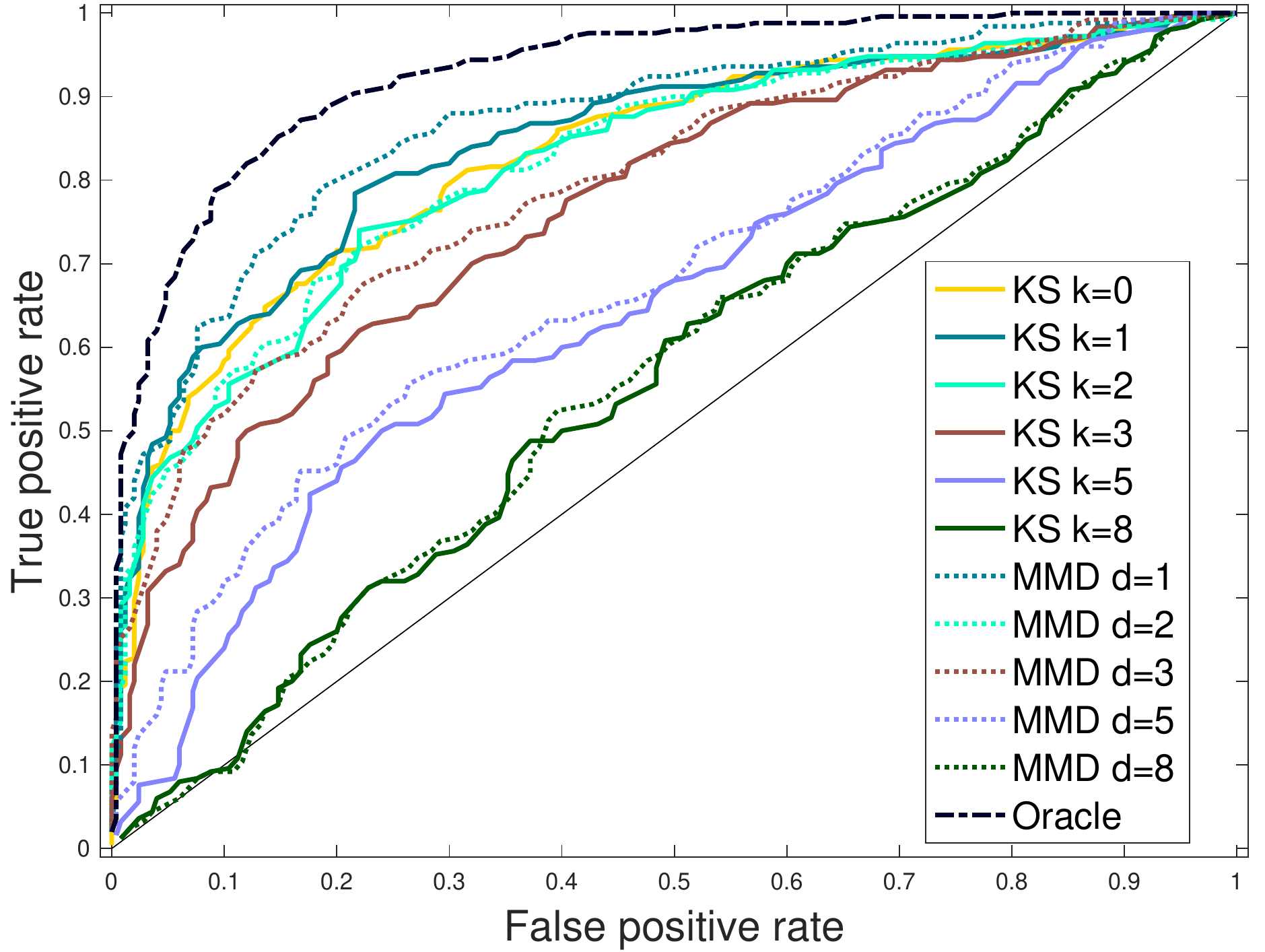}
\includegraphics[width=0.485\textwidth]{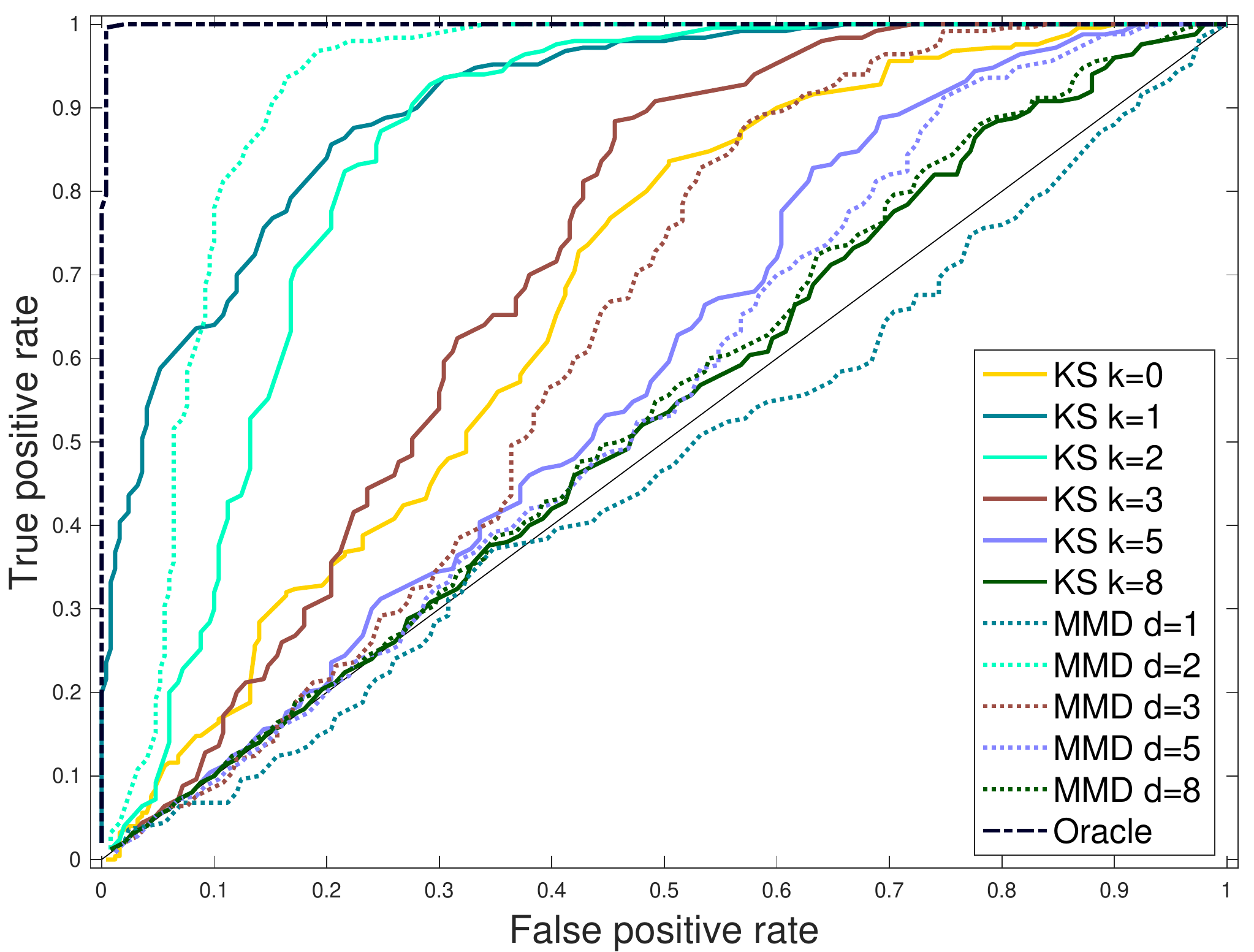}
\caption{\it\small ROC curves for $P=N(0,1)$, $Q=N(0.2,1)$ (left), and
  $P=N(0,1)$, $Q=t(3)$ (right).} 
\label{fig:poly}
\end{figure}

There is one important point to make clear: the population MMD test with a  
polynomial kernel is {\it not} a metric, i.e., there are distributions $P \not=
Q$ for which the population-level test statistic is exactly 0.  This is because  
it only considers moment differences up to order $d$, thus any pair of
distributions $P,Q$ that match in the first $d$ moments but differ in (say) 
the $(d+1)$st will lead to a population-level statistic that 0.  In this sense,
the MMD test with a polynomial kernel is not truly nonparametric, whereas the KS
test, the higher-order KS tests the MMD test with a Gaussian kernel, the energy
distance test, the Anderson-Darling test, etc., all are. 

\subsection{Proof of Proposition \ref{prop:hks_integrated_cdfs}}

For $k \geq 1$, recall our definition of $I^k$ the $k$th order integral
operator,   
$$
(I^k f)(x) = \int_0^x \int_0^{t_k} \cdots \int_0^{t_2} f(t_1) \, dt_1 \cdots 
dt_k, 
$$
Further, for $k \geq 1$, denote by $D^k$ the $k$th order derivative
operator,  
$$
(D^k f)(x) = f^{(k)}(x), 
$$
Is it not hard to check that over all functions $f$ with $k$ weak derivatives,
and that obey the boundary conditions
$f(0)=f'(0)=\cdots=f^{(k-1)}(0)=0$, these two operators act as
inverses, in that  
$$
D^k I^k f = f, \;\; \text{and} \;\; I^k D^k f = f.
$$

For a measure $\mu$, denote $\langle f, d\mu\rangle = \int f(x) \,
d\mu(x)$. (This is somewhat of an abuse of the notation for the usual $L_2$
inner product on square integrable functions, but it is convenient for what
follows.)  With this notation, we can write the $k$th order KS test statistic,
at the population-level, as  
\begin{align}
\nonumber
\sup_{f \in \cF_k} \; |\P f - \Q f| 
&= \sup_{f \in \cF_k} \; |\langle f, dP-dQ \rangle| \\
\nonumber
&= \sup_{f \in \cF_k} \; |\langle I^k D^k f, dP-dQ \rangle| \\
\nonumber
&= \sup_{\substack{h : \TV(h) \leq 1, \\ h(0+)=0 \, \text{or} \, h(0-) = 0}} \;
  |\langle I^k h, dP-dQ \rangle| \\ 
\nonumber
&= \sup_{\substack{h : \TV(h) \leq 1, \\ h(0+)=0 \, \text{or} \, h(0-) = 0}} \;
  |\langle h, (I^k)^* (dP-dQ) \rangle| \\ 
\label{eq:hks_adjoint}
&= \| (I^1)^* (I^k)^* (dP-dQ) \|_\infty. 
\end{align}
In the second line, we used the fact that $I^k$ and $D^k$ act as inverses over
$f \in \cF_k$ because these functions all satisfy the appropriate boundary 
conditions.  In the third line, we simply reparametrized via $h=f^{(k)}$.  In
the fourth line, we introduced the adjoint operator $(I^k)^*$ of $I^k$ (which
will be described in detail shortly).  In the fifth line, we leveraged the
variational result for the KS test ($k=0$ case), where $(I^1)^*$ denotes the
adjoint of the integral operator $I^1$ (details below), and we note that the
limit condition at 0 does do not affect the result here.

We will now study the adjoints corresponding to the integral operators.  By 
definition $(I^1)^* g$ must satisfy for all functions $f$ 
$$
\langle I^1 f, g \rangle = \langle f, (I^1)^* g \rangle.
$$
We can rewrite this as 
$$
\int \int_0^x f(t) g(x) \, dt \, dx = 
\int f(t) ((I^1)^* g)(t) \, dt,
$$
and we can recognize by Fubini's theorem that therefore
$$
((I^1)^* g)(t) = \begin{cases}
\displaystyle \int_t^\infty g(x) \, dx & t \geq 0 \\
\displaystyle -\int_{-\infty}^t g(x) \, dx & t < 0.
\end{cases}
$$
For functions $g$ that integrate to 0, this simplifies to  
\begin{equation}
\label{eq:int_adjoint}
((I^1)^* g)(t) = \int_t^\infty g(x) dx, \;\; t \in \R.
\end{equation}

Returning to \eqref{eq:hks_adjoint}, because we can decompose $I^k = I^1 I^1  
\cdots I^1$ ($k$ times composition), it follows that $(I^k)^* = (I^1)^* (I^1)^*
\cdots (I^1)^*$  ($k$ times composition), so  
$$
\| (I^1)^* (I^k)^* (dP-dQ) \|_\infty = \| (I^k)^* (I^1)^* (dP-dQ) \|_\infty  
= \| (I^k)^* (F_P-F_Q) \|_\infty,
$$
where in the last step we used \eqref{eq:int_adjoint}, as $dP-dQ$ integrates to
0.  This proves the first result in the proposition.

To prove the second result, we will show that 
$$
 (I^k)^* (F_P-F_Q) (x) = \int_x^\infty \int_{t_k}^\infty \cdots
 \int_{t_2}^\infty (F_P-F_Q)(t_1) \, dt_1 \cdots dt_k,
$$
when $P,Q$ has nonnegative supports, or have $k$ matching moments.  In the first
case, the above representation is clear from the definition of the adjoint.  In
the second case, we proceed by induction on $k$.  For $k=1$, note that $F_P-F_Q$
integrates to 0, which is true because    
$$
\langle 1, F_P-F_Q \rangle = \langle 1, (I^1)^*(dP-dQ) \rangle = \langle x,
dP-dQ \rangle = 0, 
$$
the last step using the fact that $P,Q$ have matching first moment.  Thus,
as $F_P-F_Q$ integrates to 0, we can use \eqref{eq:int_adjoint} to see that 
$$
(I^1)^* (F_P-F_Q) (x) = \int_x^\infty (F_P-F_Q)(t) \, dt.
$$
Assume the result holds for $k-1$. We claim that $(I^{k-1})^*(F_P-F_Q)$
integrates to 0, which is true as 
$$
\langle 1, (I^{k-1})^*(F_P-F_Q) \rangle = \langle 1, (I^k)^*(dP-dQ) \rangle = 
\langle x^k/k!, dP-dQ \rangle = 0, 
$$
the last step using the fact that $P,Q$ have matching $k$th moment.  Hence,
as $(I^{k-1})^*(F_P-F_Q)$ integrates to 0, we can use \eqref{eq:int_adjoint} and
conclude that 
\begin{align*}
(I^k)^* (F_P-F_Q) (x) &= (I^1)^*(I^{k-1})^* (F_P-F_Q) (x) \\
&= \int_x^\infty (I^{k-1})^* (F_P-F_Q)(t) \, dt \\
&= \int_x^\infty \int_{t_k}^\infty \cdots \int_{t_2}^\infty
(F_P-F_Q)(t_1) \, dt_1 \cdots dt_k,
\end{align*}
where in the last step we used the inductive hypothesis.  This completes the
proof. 

\bibliographystyle{plainnat}
\bibliography{veeru}

\end{document}